\declaretheorem{lemma}
\declaretheorem{definition}
\DeclareMathOperator{\Tr}{Tr}
\DeclareMathOperator{\softmaxOp}{softmax}
\newcommand{\softmax}[1]{\softmaxOp\left(#1\right)}
\def\grp{{G}}
\def\op{\cdot}
\def\id{e}
\icmltitlerunning{Equivariant Self-Attention for Lie Groups}
\begin{document}

\twocolumn[
\icmltitle{LieTransformer: Equivariant Self-Attention for Lie Groups}



\icmlsetsymbol{equal}{*}

\begin{icmlauthorlist}
\icmlauthor{Michael Hutchinson}{equal,ox}
\icmlauthor{Charline Le Lan}{equal,ox}
\icmlauthor{Sheheryar Zaidi}{equal,ox} \\
\icmlauthor{Emilien Dupont}{ox} 
\icmlauthor{Yee Whye Teh}{ox,dm}
\icmlauthor{Hyunjik Kim}{dm}
\end{icmlauthorlist}

\icmlaffiliation{ox}{Department of Statistics, University of Oxford, UK}
\icmlaffiliation{dm}{DeepMind, UK}

\icmlcorrespondingauthor{Michael Hutchinson}{hutchinson.michael.john@gmail.com}
\icmlcorrespondingauthor{Charline Le Lan}{charline.lelan@stats.ox.ac.uk}
\icmlcorrespondingauthor{Sheheryar Zaidi}{sheh\_zaidi96@hotmail.com}

\icmlkeywords{symmetries, equivariance, self-attention, transformer}

\vskip 0.3in
]



\printAffiliationsAndNotice{\icmlEqualContribution} 

\begin{abstract}
Group equivariant neural networks are used as building blocks of group invariant neural networks, which have been shown to improve generalisation performance and data efficiency through principled parameter sharing. Such works have mostly focused on group equivariant convolutions, building on the result that group equivariant linear maps are necessarily convolutions. In this work, we extend the scope of the literature to \textit{self-attention}, that is emerging as a prominent building block of deep learning models. We propose the \verb!LieTransformer!, an architecture composed of \verb!LieSelfAttention! layers that are equivariant to arbitrary Lie groups and their discrete subgroups. We demonstrate the generality of our approach by showing experimental results that are competitive to baseline methods on a wide range of tasks: shape counting on point clouds, molecular property regression and modelling particle trajectories under Hamiltonian dynamics.
\end{abstract}

\section{Introduction}
Group equivariant neural networks are useful architectures for problems with symmetries that can be described in terms of a group (in the mathematical sense). Convolutional neural networks (CNNs) are a special case that deal with translational symmetry, in that when the input to a convolutional layer is translated, the output is also translated. This property is known as \textit{translation equivariance}, and offers a useful inductive bias for perception tasks which usually have translational symmetry. Constraining a linear layer to obey this symmetry, resulting in a covolutional layer, greatly reduces the number of parameters and computational cost. This has led to the success of CNNs in multiple domains such as computer vision \citep{krizhevsky2012imagenet} and audio \citep{graves2014towards}. Following on from this success, there has been a growing literature on the study of group equivariant CNNs (G-CNNs) that generalise CNNs to deal with other types of symmetries beyond translations, such as rotations and reflections. 

Most works on group equivariant NNs deal with CNNs i.e.~linear maps with shared weights composed with pointwise non-linearities, building on the result that group equivariant linear maps (with mild assumptions) are necessarily convolutions \citep{kondor2018generalization, cohen2019general, bekkers2020b}. However there has been little work on non-linear group equivariant building blocks. In this paper we extend group equivariance to self-attention \citep{vaswani2017attention}, a non-trivial non-linear map, that has become a prominent building block of deep learning models in various data modalities, such as natural-language processing \citep{vaswani2017attention,  brown2020language}, computer vision \citep{zhang2018self, ramachandran2019stand}, reinforcement learning \citep{parisotto2019stabilizing}, and audio generation \citep{huang2018music}.

We thus propose \verb!LieTransformer!, a group invariant Transformer built from group equivariant \verb!LieSelfAttention! layers. It uses a lifting based approach, that relaxes constraints on the attention module compared to approaches without lifting. Our method is applicable to Lie groups and their discrete subgroups (e.g.~cyclic groups $C_n$ and dihedral groups $D_n$) acting on homogeneous spaces.
Our work is very much in the spirit of \citet{finzi2020generalizing}, our main baseline, but for group equivariant self-attention instead of convolutions.
Among works that deal with equivariant self-attention, we are the first to propose a methodology for general groups and domains (unspecified to 2D images \cite{romero2020attentive, romero2020group} or 3D point clouds \cite{fuchs2020se}).
We demonstrate the generality of our approach through strong performance on a wide variety of tasks, namely shape counting on point clouds, molecular property regression and modelling particle trajectories under Hamiltonian dynamics.

\section{Background}
\label{sec:background}
\subsection{Group Equivariance}
This section lays down some of the necessary definitions and notations in group theory and representation theory in an informal and intuitive manner. For a more formal presentation of definitions, see \autoref{sec:def}.

Loosely speaking, a \textbf{group} $G$ is a set of symmetries, with each group element $g$ corresponding to a symmetry transformation. These group elements ($g,g' \in G$) can be composed ($gg'$) or inverted ($g^{-1}$), just like transformations. An example of a \textbf{discrete group} is $C_n$, the set of rotational symmetries of a regular $n$-gon. The group consists of $n$ such rotations, including the identity. An example of a continuous (infinite) group is $SO(2)$, the set of all 2D rotations about the origin. $C_n$ is a subset of $SO(2)$, hence we call $C_n$ a \textbf{subgroup} of $SO(2)$. Note that $SO(2) = \{g_\theta : \theta \in [0,2\pi) \}$ can be parameterised by the angle of rotation $\theta$. Such groups that can be continuously parameterised by real values are called \textbf{Lie groups}.

A symmetry transformation of group element $g \in G$ on object $v \in V$ is referred to as the \textbf{group action} of $G$ on $V$. If this action is linear on a vector space $V$, then we can represent the action as a linear map $\rho(g)$. We call $\rho$ a \textbf{representation} of $G$, and $\rho(g)$ often takes the form of a matrix.
For $SO(2)$, the standard rotation matrix is an example of a representation that acts on $V=\mathbb{R}^2$:
\begin{equation}
    \rho(g_\theta) = \begin{bmatrix}
    \cos \theta & -\sin \theta \\
    \sin \theta  & \cos \theta 
    \end{bmatrix}
\end{equation}
Note that this is only one of many possible representations of $SO(2)$ acting on $\mathbb{R}^2$ (e.g.~replacing $\theta$ with $n \theta$ yields another valid representation), and $SO(2)$ can act on spaces other than $\mathbb{R}^2$, e.g.~$\mathbb{R}^d$ for arbitrary $d \geq 2$.

In the context of group equivariant neural networks, $V$ is commonly defined to be the space of scalar-valued functions on some set $S$, so that $V = \{f \mid f:S \to \mathbb{R}\}$. This set could be a Euclidean input space e.g.~a grey-scale image can be expressed as a feature map $f: \mathbb{R}^2 \rightarrow \mathbb{R}$ from pixel coordinate $x_i$ to pixel intensity $\mathtt{f}_i$, supported on the grid of pixel coordinates.  We may express the rotation of the image as a representation of $SO(2)$ by extending the action $\rho$ on the pixel coordinates to a representation $\pi$ that acts on the space of feature maps: 
\begin{equation} \label{eq:regular_x}
    [\pi(g_\theta)(f)](x) \triangleq f (\rho(g_\theta^{-1})x).
\end{equation}
Note that this is equivalent to the mapping $(x_i, \mathtt{f}_i)_{i=1}^n \mapsto (\rho(g_{\theta})x_i, \mathtt{f}_i)_{i=1}^n$.
As a special case, we can define $V = \{f| f: \grp \rightarrow \mathbb{R} \}$ to be the space of scalar-valued functions on the group $\grp$, for which we can define a representation $\pi$ acting on $V$ via the \textbf{regular representation}:
\begin{equation} \label{eq:regular}
    [\pi(g_\theta)(f)](g_{\phi}) \triangleq f (g_\theta^{-1} g_{\phi}).
\end{equation}
Here the action $\rho$ is replaced by the action of the group on itself.
If we wish to handle multiple channels of data, e.g. RGB images, we can stack  these feature maps together, transforming in a similar manner.

Now let us define the notion of \textbf{$G$-equivariance}.
\begin{definition}
We say that a map $\Phi: V_1 \rightarrow V_2$ is \textbf{$G$-equivariant} with respect to actions $\rho_1, \rho_2$ of $G$ acting on $V_1,V_2$ respectively if: $\Phi [\rho_1(g) f] = \rho_2(g) \Phi[f]$ for any $g \in G, f \in V_1$. 
\end{definition}

In the above example of rotating RGB images, we have $G=SO(2)$ and $\rho_1=\rho_2=\pi$. Hence the equivariance of $\Phi$ with respect to $SO(2)$ means that rotating an input image and then applying $\Phi$ yields the same result as first applying $\Phi$ to the original input image and then rotating the output, i.e.~$\Phi$ \textit{commutes} with the representation $\pi$.

The end goal for group equivariant neural networks is to design a neural network that obeys certain symmetries in the data. For example, we may want an image classifier to output the same classification when the input image is rotated. So in fact we want a \textbf{$G$-invariant} neural network, where the output is invariant to group actions on the input space. Note that $G$-invariance is a special case of $G$-equivariance, where $\rho_2$ is the \textbf{trivial representation} i.e.~$\rho_2(g)$ is the identity map for any $g \in G$. Invariant maps are easy to design, by discarding information, e.g.~pooling over spatial dimensions is invariant to rotations and translations. However, such maps are not expressive as they fail to extract high-level semantic features from the data. This is where equivariant neural networks become relevant; the standard recipe for constructing an expressive invariant neural network is to compose multiple equivariant layers with a final invariant layer. It is a standard result that such maps are invariant (e.g.~\citet{bloem2020probabilistic})
and a proof is given in \autoref{sec:proof} for completeness.

\begin{figure*}[t]
    \centering
    \includegraphics[width=\textwidth]{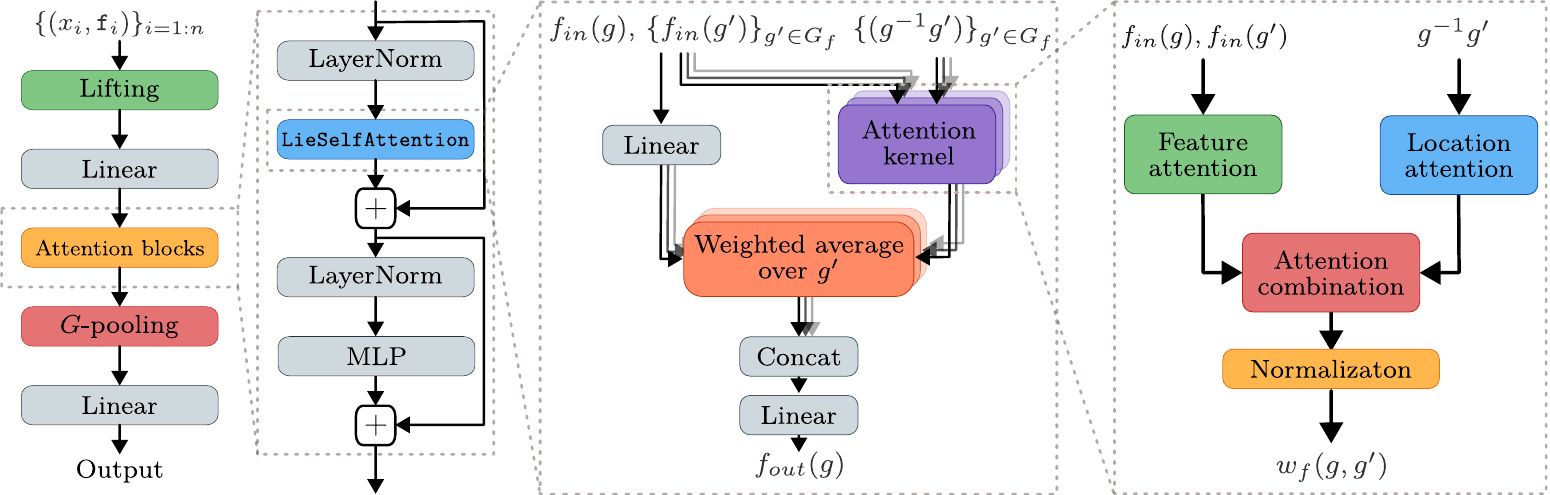}
    \caption{Architecture of the LieTransformer.}
    \label{fig:architecture}
    \vspace{-1em}
\end{figure*}

\subsection{Equivariant Maps on Homogeneous Input Spaces} \label{sec:hom}
Here we introduce the framework for $G$-equivariant maps, and provide group equivariant convolutions as an example. Suppose we have data in the form of a set of input pairs $(x_i,\mathtt{f}_i)_{i=1}^n$ where $x_i \in \mathcal{X}$ are spatial coordinates and $\mathtt{f}_i \in \mathcal{F}$ are feature values. The data can be described as a single feature map $f_{\mathcal{X}}: x_i \mapsto \mathtt{f}_i$. We assume that a group $G$ acts on the $x$-space $\mathcal{X}$ via action $\rho$, and that the action is \textbf{transitive} (also referred to as $\mathcal{X}$ being \textbf{homogeneous}). This means that all elements of $\mathcal{X}$ are connected by the action: $\forall x,x' \in \mathcal{X}$, $\exists g \in G: \rho(g)x = x'$. We often write $gx$ instead of $\rho(g)x$ to reduce clutter. For example, the group of 2D translations $T(2)$ acts transitively on $\mathbb{R}^2$ since there is a translation connecting any two points in $\mathbb{R}^2$. On the other hand, the group of 2D rotations about the origin $SO(2)$ does not act transitively on $\mathbb{R}^2$, since points that have different distances to the origin cannot be mapped onto each other by rotations. However the group of 2D roto-translations $SE(2)$, whose elements can be written as a composition $tR$ of $t \in T(2)$ and $R \in SO(2)$, acts transitively on $\mathbb{R}^2$ since $SE(2)$ contains $T(2)$.

For such homogeneous spaces $\mathcal{X}$, it can be shown that there is a natural partition of $G$ into disjoint subsets such that there is a one-to-one correspondence between $\mathcal{X}$ and these subsets. Namely each $x \in \mathcal{X}$ corresponds to the \textbf{coset} $s(x)H=\{s(x)h | h \in H\}$, where the subgroup $H= \{g \in G|gx_0 = x_0\}$ is called the \textbf{stabiliser} of origin $x_0$, and $s(x) \in G $ is a group element that maps $x_0$ to $x$. It can be shown that the coset $s(x)H$ does not depend on the choice of $s(x)$, and that $s(x)H$ and $s(x')H$ are disjoint for $x \neq x'$. For $T(2)$ acting on $\mathbb{R}^2$, we have $H=\{e\}$, the identity, and $s(x)=t_x$, the group element describing the translation from $x_0$ to $x$, and so each $x$ corresponds to $\{t_x\}$. For $SE(2)$ acting on $\mathbb{R}^2$, we have $H=SO(2)$ and $s(x)=t_x$, so each $x$ corresponds to $\{t_x R| R \in SO(2)\}$. This correspondence is often written as an isomorphism $X \simeq G/H$, where $G/H$ is the set of cosets of $H$.

Using this isomorphism, we can map each point in $\mathcal{X}$ to a set of group elements in $G$, i.e.~mapping each data pair $(x_i, \mathtt{f}_i)$ to (possibly multiple) pairs $\{(g, \mathtt{f}_i) | g \in s(x_i)H\}$. This can be thought of as \textbf{lifting} the feature map $f_{\mathcal{X}}: x_i \mapsto \mathtt{f}_i$ defined on $\mathcal{X}$ to a feature map $\mathcal{L}[f_{\mathcal{X}}]: g \mapsto \mathtt{f}_i$ defined on $G$ \citep{kondor2018generalization}. Let $\mathcal{I}_U$ denote the space of such feature maps from $G$ to $\mathcal{F}$. Subsequently, we may define group equivariant maps as functions from $\mathcal{I}_U$ to itself, which turns out to be a simpler task than defining equivariant maps directly on $\mathcal{X}$.

The \textbf{group equivariant convolution} \citep{cohen2016group,cohen2018spherical,finzi2020generalizing,romero2020attentive} is an example of such a group equivariant map that has been studied extensively. 
Specifically, the group equivariant convolution $\Psi:\mathcal{I}_U \rightarrow \mathcal{I}_U$ is defined as:
\begin{equation} \label{eq:conv}
    [\Psi f](g) \triangleq \int_{G} \psi(g'^{-1}g)f(g') dg'
\vspace{-0.6mm}
\end{equation}%
where $\psi: G \rightarrow \mathbb{R}$ is the convolutional filter and the integral is defined with respect to the left Haar measure of $G$. Note that for discrete groups the integral amounts to a sum over the group. Hence the integral can be computed exactly for discrete groups \citep{cohen2016group}, and for Lie groups it can be approximated using Fast Fourier Transforms \citep{cohen2018spherical} or Monte Carlo (MC) estimation \citep{finzi2020generalizing}. Given the regular representation $\pi$ of $G$ acting on $\mathcal{I}_U$ as in \autoref{eq:regular}, we can easily verify that $\Psi$ is equivariant with respect to $\pi$ (c.f. \autoref{sec:proof}).

\section{LieTransformer}
\label{sec:LieTransformer}
We first outline the problem setting before describing our model, the \verb!LieTransformer!. We tackle the problem of regression/classification for predicting a scalar/vector-valued target $y$ given a set of input pairs $(x_i,\mathtt{f}_i)_{i=1}^n$ where $x_i \in \mathbb{R}^{d_x}$ are spatial locations and $\mathtt{f}_i \in \mathbb{R}^{d_f}$ are feature values at the spatial location. Hence the training data of size $N$ is a set of tuples $((x_i, \mathtt{f}_i)_{i=1}^{n_j}, y_j)_{j=1}^N$. In some tasks such as point cloud classification, the feature values $\mathtt{f}_i$ may not be given. In this case the $\mathtt{f}_i$ can set to be a fixed constant or a ($G$-invariant) function of $(x_i)_{i=1}^n$.

\verb!LieTransformer! is composed of a \textbf{lifting} layer followed by residual blocks of \verb!LieSelfAttention! layers, \verb!LayerNorm! and pointwise \verb!MLP!s, all of which are equivariant with respect to the regular representation, followed by a final invariant \verb!G-pooling! layer (c.f. Appendix \ref{apd:norm} for more details on these layers). We summarise the architecture in \autoref{fig:architecture} and describe its key components below.

\subsection{Lifting} \label{sec:method}

\begin{figure*}[t]
    \centering
    \includegraphics[width=\textwidth]{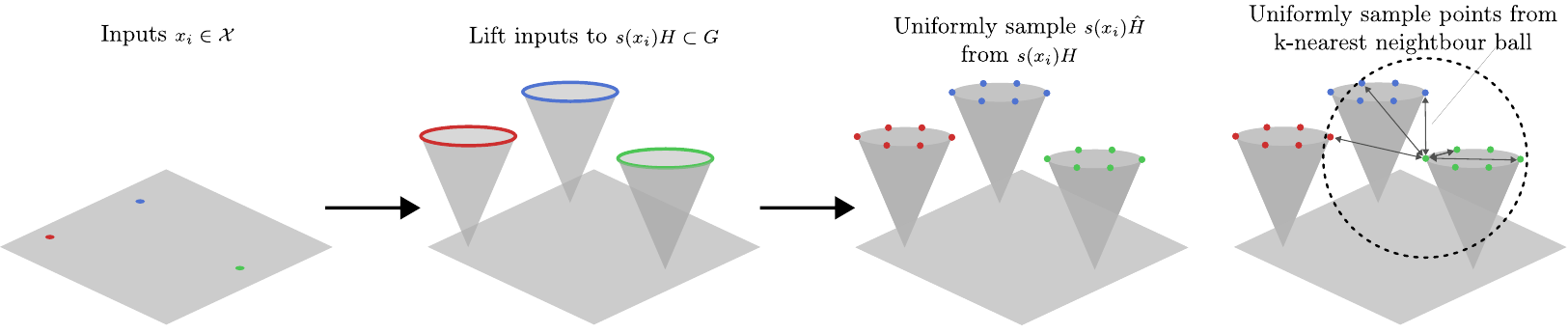}
    \caption{Visualisation of lifting, sampling $\hat{H}$, and subsampling in the local neighbourhood for $SE(2)$ acting on $\mathbb{R}^2$. Self-attention is performed on this subsampled neighbourhood. 
    }
    \label{fig:lifting_sampling}
\end{figure*}

Recall from Section \ref{sec:hom} that the \textbf{lifting} $\mathcal{L}$ maps $f_{\mathcal{X}}$  (supported on $\bigcup_{i=1}^n \{x_i\} \subset \mathcal{X}$) to $\mathcal{L}[f_{\mathcal{X}}]$ (supported on $\bigcup_{i=1}^n s(x_i)H \subset G$) such that:
\begin{equation}
    \mathcal{L}[f_{\mathcal{X}}](g) \triangleq \mathtt{f}_i ~~ \text{for} ~~ g \in s(x_i)H.
\end{equation}
This can be thought of as extending the domain of $f_{\mathcal{X}}$ from $\mathcal{X}$ to $G$ while preserving the feature values $\mathtt{f_i}$, mapping $(x_i, \mathtt{f}_i) \mapsto (g, \mathtt{f}_i)$ for $g \in s(x_i)H$ (c.f. \autoref{fig:lifting_sampling}). Subsequently we may design $G$-equivariant maps on the space of functions on $G$, which is a simpler task than desgining $G$-equivariant maps directly on $\mathcal{X}$ (e.g.~\citet{cohen2018spherical}).

As in Equations \ref{eq:regular_x} and \ref{eq:regular}, we define the representation $\pi$ on $f_{\mathcal{X}}$ and $\mathcal{L}[f_{\mathcal{X}}]$ as:
\begin{gather*}
    [\pi(u)f_{\mathcal{X}}](x) = f_{\mathcal{X}}(u^{-1}x) \\
    [\pi(u)\mathcal{L}[f_{\mathcal{X}}]](g) = \mathcal{L}[f_{\mathcal{X}}](u^{-1}g)
\end{gather*}
where $u \in G$. Note that the actions correspond to mappings $(x_i,\mathtt{f}_i) \mapsto (ux_i,\mathtt{f}_i)$ and $(g, \mathtt{f}_i) \mapsto (ug, \mathtt{f}_i)$ respectively.

We need to ensure that lifting preserves equivariance, which is why we need the space to be homogeneous with respect to the action of $G$ on $\mathcal{X}$.
\begin{restatable}{proposition}{eqlifting}
The lifting layer $\mathcal{L}$ is equivariant with respect to the representation $\pi$.
\end{restatable}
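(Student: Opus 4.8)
The plan is to reduce the lifting to a single section-free formula and then obtain equivariance by direct substitution. The key observation I would establish first is that for any $g \in s(x_i)H$ we have $g x_0 = x_i$: writing $g = s(x_i) h$ with $h \in H$, the defining property $h x_0 = x_0$ of the stabiliser combined with $s(x_i) x_0 = x_i$ gives $g x_0 = s(x_i) h x_0 = s(x_i) x_0 = x_i$. This shows the lifting admits the description
\[
    \mathcal{L}[f_{\mathcal{X}}](g) = f_{\mathcal{X}}(g x_0),
\]
valid on the support $\bigcup_i s(x_i) H$ of $\mathcal{L}[f_{\mathcal{X}}]$. The point of this reformulation is that it removes all explicit dependence on the choice of section $s$, which is what makes the subsequent computation transparent.

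With this identity in hand, I would verify $\mathcal{L}[\pi(u) f_{\mathcal{X}}] = \pi(u)\mathcal{L}[f_{\mathcal{X}}]$ by evaluating both sides at an arbitrary $g$. For the left-hand side, the section-free formula followed by the definition of $\pi$ on $\mathcal{X}$ gives
\[
    \mathcal{L}[\pi(u) f_{\mathcal{X}}](g) = [\pi(u) f_{\mathcal{X}}](g x_0) = f_{\mathcal{X}}(u^{-1} g x_0).
\]
For the right-hand side, the definition of $\pi$ on $G$ followed by the section-free formula gives
\[
    [\pi(u)\mathcal{L}[f_{\mathcal{X}}]](g) = \mathcal{L}[f_{\mathcal{X}}](u^{-1} g) = f_{\mathcal{X}}\big((u^{-1}g) x_0\big) = f_{\mathcal{X}}(u^{-1} g x_0).
\]
The two expressions coincide, which is exactly the required equivariance.

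The hard part will not be the algebra but the bookkeeping of supports, so that the pointwise equality above is actually comparing functions defined on the same subset of $G$. The map $\pi(u) f_{\mathcal{X}}$ corresponds to the transformed data $(u x_i, \mathtt{f}_i)$, so $\mathcal{L}[\pi(u) f_{\mathcal{X}}]$ is supported on $\bigcup_i s(u x_i) H$, whereas $\pi(u)\mathcal{L}[f_{\mathcal{X}}]$ is supported on $\bigcup_i u\, s(x_i) H$. I would reconcile these by noting that $u\, s(x_i)$ maps $x_0$ to $u x_i$ and is therefore a legitimate choice of section $s(u x_i)$; since the coset $s(u x_i) H$ does not depend on this choice, we get $s(u x_i) H = u\, s(x_i) H$. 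This is precisely the place where homogeneity of $\mathcal{X}$ and the well-definedness of the coset correspondence from Section~\ref{sec:hom} are used, explaining why homogeneity was imposed. Once the supports are matched in this way, the substitution argument above completes the proof.
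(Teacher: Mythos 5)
Your proof is correct, and it ultimately rests on the same key fact as the paper's proof --- the coset identity $s(ux_i)H = u\,s(x_i)H$ --- but it is organized differently. The paper works coset-by-coset: it observes that $\mathcal{L}[\pi(u)f_{\mathcal{X}}]$ takes the value $\mathtt{f}_i$ on $s(ux_i)H$ while $\pi(u)\mathcal{L}[f_{\mathcal{X}}]$ takes the value $\mathtt{f}_i$ on $u\,s(x_i)H$, and then proves these cosets coincide by an explicit double inclusion, using the characterization $s(x)H = \{g \in G : g x_0 = x\}$. Your section-free formula $\mathcal{L}[f_{\mathcal{X}}](g) = f_{\mathcal{X}}(g x_0)$ does not appear in the paper; it repackages that same characterization into a statement under which the value-level computation becomes a one-line substitution (lifting is pullback along the orbit map $g \mapsto g x_0$, which intertwines left multiplication on $G$ with the action on $\mathcal{X}$), and it cleanly separates the value bookkeeping from the support bookkeeping. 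What the paper's version buys is self-containedness: its double inclusion is in effect a proof of section-independence of cosets in the case at hand, whereas you invoke section-independence as a known fact (asserted without proof in Section \ref{sec:hom}). What your version buys is transparency and the elimination of index-chasing, since the case analysis over $i$ disappears, and the same argument applies verbatim to any feature map supported on a union of cosets. Both proofs use homogeneity in the same place: to guarantee that the sections $s(ux_i)$ exist and that the corresponding cosets are well defined.
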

\textit{Intuition for proof}. When $x_i$ is shifted by $u \in G$, the lifted coset $s(x_i)H$ is also shifted by $u$, i.e.~$x_i \mapsto ux_i \Rightarrow s(x_i)H \mapsto us(x_i)H$. See \autoref{sec:proof} for full proof.

\subsection{LieSelfAttention}

\begin{algorithm}
\caption{LieSelfAttention}
\label{alg:lie-self-attention}
\renewcommand{\algorithmicrequire}{\textbf{Input:}}
\renewcommand{\algorithmicensure}{\textbf{Output:}}
\algrenewcommand\algorithmicindent{2em}
\algrenewcommand\algorithmicdo{}
\begin{algorithmic}
\Require $(g, f(g))_{g \in G_f}$
\For{$g \in G_f$}
    \For{$g' \in G_f$ (or $\mathtt{nbhd}_{\eta}(g)$)}
        \State \(\triangleright\) Compute content/location attention
        \State $k_c(f(g),f(g')), k_l(g^{-1}g')$ 
        \State \(\triangleright\) Compute unnormalised weights
        \State $\alpha_f(g,g') = F(k_c(f(g),f(g')), k_l(g^{-1}g'))$
    \EndFor
    \State \(\triangleright\) Compute normalised weights and output
\State $\{w_f(g,g')\}_{g' \in G_f} = \mathtt{norm}{\{\alpha_f(g,g')\}_{g' \in G_f}}$
\State $f_{out}(g) = \int_{G_f} w_f(g,g') W^V f(g') dg'$
\EndFor
\Ensure $(g, f_{out}(g))_{g \in G_f}$
\end{algorithmic}
\end{algorithm}

Let $f \triangleq \mathcal{L}[f_{\mathcal{X}}]$, hence $f$ is defined on the set $G_f = \cup_{i=1}^n s(x_i)H $. We define the \verb!LieSelfAttention! layer in Algorithm \ref{alg:lie-self-attention}, where self-attention (see \autoref{sec:sa} for the original formulation) is defined across the elements of $G_f$. There are various choices for functions content-based attention $k_c$ , location-based attention $k_l$ , $F$ that determines how to combine the two to form unnormalised weights, and the choice of normalisation of weights. See \autoref{sec:sa_choices} for a non-exhaustive list of choices of the above, and also for the details of the multi-head generalisation of \verb!LieSelfAttention!.

\begin{restatable}{proposition}{eqattention}
\label{prop:eqattention}
LieSelfAttention is equivariant with respect to the regular representation $\pi$.
\end{restatable}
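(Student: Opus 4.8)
The plan is to show directly that the self-attention operator $\mathcal{A}: f \mapsto f_{out}$ defined by Algorithm~\ref{alg:lie-self-attention} commutes with $\pi$, i.e. $\mathcal{A}[\pi(u)f] = \pi(u)\mathcal{A}[f]$ for all $u \in G$. The central fact is that every ingredient of the layer depends on the group elements only through the left-invariant combination $g^{-1}g'$ (for the location term) or through feature values (for the content term), both of which are unaffected by a left-shift. First I would record how a left-shift acts on the lifted data: writing $\tilde f \triangleq \pi(u)f$, we have $\tilde f(g) = f(u^{-1}g)$, and the support of $\tilde f$ is $uG_f = \cup_i u\, s(x_i)H$. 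It therefore suffices to evaluate $\mathcal{A}[\tilde f]$ at points of the form $g = uh$ with $h \in G_f$ and show it equals $\mathcal{A}[f](h)$, since $[\pi(u)\mathcal{A}[f]](uh) = \mathcal{A}[f](h)$ by definition of $\pi$.

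The core computation is to track the attention scores under this substitution. Evaluating the unnormalised weight at shifted arguments $g = uh$, $g' = uh'$ with $h, h' \in G_f$, the location term satisfies $k_l((uh)^{-1}(uh')) = k_l(h^{-1}h')$ because the $u$'s cancel, and the content term satisfies $k_c(\tilde f(uh), \tilde f(uh')) = k_c(f(h), f(h'))$ since $\tilde f(uh') = f(h')$. Hence $\alpha_{\tilde f}(uh, uh') = F(k_c(f(h),f(h')), k_l(h^{-1}h')) = \alpha_f(h, h')$: the raw scores are literally identical to those computed from $f$, merely reindexed by the bijection $h' \mapsto uh'$ between $G_f$ and its shift $uG_f$.

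Next I would push this identity through the normalisation and the value aggregation. Both operations involve a sum (discrete groups) or integral (Lie groups) over the index set $\{g'\}$, which after the substitution $g' = uh'$ ranges over $G_f$; invoking the \emph{left}-invariance of the Haar measure ($dg' = dh'$) together with $\alpha_{\tilde f}(uh, uh') = \alpha_f(h, h')$, the normalised weights obey $w_{\tilde f}(uh, uh') = w_f(h, h')$. The same substitution in the output integral, together with $\tilde f(uh') = f(h')$, then yields $\mathcal{A}[\tilde f](uh) = \int_{G_f} w_f(h, h') W^V f(h')\, dh' = \mathcal{A}[f](h) = [\pi(u)\mathcal{A}[f]](uh)$. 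Since the supports of $\mathcal{A}[\tilde f]$ and $\pi(u)\mathcal{A}[f]$ both equal $uG_f$, this establishes equivariance.

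The step I expect to need the most care is the normalisation together with the optional restriction to a local neighbourhood $\mathtt{nbhd}_{\eta}(g)$: the argument above only goes through if $\mathtt{norm}$ depends solely on the (reindexed) family of scores and if the neighbourhood is itself defined in a left-invariant way, i.e. $\mathtt{nbhd}_{\eta}(uh) = u\cdot\mathtt{nbhd}_{\eta}(h)$. I would therefore verify that the neighbourhood is selected using the left-invariant quantity $g^{-1}g'$ (as the location kernel already is), so that the index set over which we normalise and integrate transforms consistently under the shift. Once this left-invariance of the neighbourhood and of the Haar measure is in hand, the remaining manipulations are the routine cancellations and change of variables above.
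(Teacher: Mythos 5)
Your proposal is correct and follows essentially the same route as the paper's proof: invariance of the attention scores $\alpha$ under left-shifts (via cancellation in $g^{-1}g'$ and relabelling of features), left-invariance of the Haar measure to handle the normalisation, and a change of variables in the output integral. The only difference is notational—you substitute $g = uh$, $g' = uh'$ where the paper evaluates at $u^{-1}g$, $u^{-1}g'$—and your closing check on the left-invariance of $\mathtt{nbhd}_{\eta}$ matches the paper's remark following its proof about local attention via an indicator $\mathds{1}\{d(g,g') < \lambda\}$ with $d$ a function of $g^{-1}g'$.
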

\textit{Intuition for proof}. \verb!LieSelfAttention! can be thought of as a map $\Phi: (g, f(g))_{g \in G_f} \mapsto (g, f_{out}(g))_{g \in G_f}$, and equivariance holds if $\forall u \in G$, $\Phi$ maps $(ug, f(g))_{g \in G_f}$ to $(ug, f_{out}(g))_{g \in G_f}$. Now note that $\Phi$ is a function of $g \in G_f$ only via $g^{-1}g'$ for $g' \in G_f$, and $g^{-1}g'$ is invariant to the group action $g \mapsto ug, g' \mapsto ug'$. This is enough to show that $\Phi$ satisfies the above condition for equivariance. See \autoref{sec:proof} for full proof.

\textbf{Generalisation to infinite $\bm{G_f}$}. For Lie Groups, $G_f$ is usually infinite (it need not be if $H$ is discrete e.g. for $T(n)$ acting on $\mathbb{R}^n$, we have $H=\{e\}$ hence $G_f$ is finite). To deal with this case we resort to Monte Carlo (MC) estimation to approximate the integral in \verb!LieSelfAttention!, following the approach of \citet{finzi2020generalizing}: 
\begin{enumerate}
    \vspace{-0.5em}
    \item Replace $G_f \triangleq \cup_{i=1}^n s(x_i)H $ with a finite subset $\hat{G}_f \triangleq \cup_{i=1}^n s(x_i) \hat{H}$ where $\hat{H}$ is a finite subset of $H$ sampled uniformly. We refer to $|\hat{H}|$ as the number of \textit{lift samples}.
    \vspace{-0.5em}
    \item (Optional, for computational efficiency) Further replace $\hat{G}_f$ with uniform samples from the neighbourhood $\mathtt{nbhd}_{\eta}(g) \triangleq \{g' \in \hat{G}_f: d(g,g') \leq \eta\}$ for some threshold $\eta$ where distance is measured by the log map $d(g,g')=||\nu[\log(g^{-1}g')]||$ (c.f. \autoref{sec:logmap}).
    \vspace{-0.5em}
\end{enumerate}
See \autoref{fig:lifting_sampling} for a visualisation. Due to MC estimation we now have equivariance in expectation as \citet{finzi2020generalizing}. For sampling within the neighbourhood, we can show that the resulting \verb!LieSelfAttention! is still equivariant in expectation given that the distance is a function of $g^{-1}g'$ (c.f. \autoref{sec:proof}).

\section{Related Work}
\textbf{Equivariant maps with/without lifting} Equivariant neural networks can be broadly categorised by whether the input spatial data is \textit{lifted} onto the space of functions on group $G$ or not. Without lifting, the equivariant map is defined between the space of functions/features on the homogeneous input space $X$, with equivariance imposing a constraint on the parameterisation of the convolutional kernel or attention module \citep{cohen2016steerable, worrall2017harmonic, thomas2018tensor, kondor2018clebsch, weiler2018learning, weiler20183d, weiler2019general, esteves2020spin, fuchs2020se}. In the case of convolutions, the kernel is expressed using a basis of equivariant functions such as circular or spherical harmonics. However with lifting, the equivariant map is defined between the space of functions/features on $G$, and aforementioned constraints on the convolutional kernel or attention module are relaxed at the cost of an increased dimensionality of the input to the neural network \citep{cohen2016group, cohen2018spherical, esteves2018learning, finzi2020generalizing, bekkers2020b, Romero2020CoAttentive, romero2020attentive, hoogeboom2018hexaconv}. Our method also uses lifting to define equivariant self-attention.

\textbf{Equivariant self-attention} Most of the above works use equivariant convolutions as the core building block of their equivariant module, drawing from the result that bounded linear operators are group equivariant if and only if they are convolutions \citep{kondor2018generalization, cohen2019general, bekkers2020b}. Such convolutions are used with pointwise non-linearities (applied independently to the features at each spatial location/group element) to form expressive equivariant maps. Exceptions to this are \citet{romero2020attentive} and \citet{fuchs2020se} that explore equivariant attentive convolutions, reweighing convolutional kernels with attention weights. This gives non-linear equivariant maps with non-linear interactions across spatial locations/group elements. Instead, our work removes convolutions and investigates the use of equivariant self-attention only, inspired by works that use stand-alone self-attention on images to achieve competitive performance to convolutions \citep{ParRamVas2019, dosovitskiy2020image}. Furthermore,  \citet{romero2020attentive} focus on image applications (hence scalability) and discrete groups (p4, p4m), and \citet{fuchs2020se} focus on 3D point cloud applications and the $SE(3)$ group with irreducible representations acting on functions on $\mathcal{X}$. Instead we use regular representations actingon functions on $G$, and give a general method for Lie groups acting on homogeneous spaces, with a wide range of applications from dealing with point cloud data to modelling Hamiltonian dynamics of particles. This is very much in the spirit of \citet{finzi2020generalizing}, except for self-attention instead of convolutions. In concurrent work, \citet{romero2020group} describe group equivariant self-attention also using lifting and regular representations. Their analogue of location-based attention are group invariant positional encodings. The main difference between the two works is that \citet{romero2020group} specify methodology for discrete groups applied to image classification only and it is not clear how to extend their approach to Lie groups. In contrast, our method provides a general formula for (unimodular) Lie groups and their discrete subgroups for the aforementioned applications.

\section{Experiments}
We consider three different tasks that have certain symmetries, highlighting the benefits of the \verb!LieTransformer!: \begin{inparaenum}[(1)] \item Counting shapes in 2D point cloud of constellations \item Molecular property regression and \item Modelling particle trajectories under Hamiltonian dynamics.\footnote{The code for our experiments is available at: \url{https://github.com/oxcsml/lie-transformer}}
\end{inparaenum}

\subsection{Counting Shapes in 2D Point Clouds} \label{sec:constellation}
\renewcommand{\arraystretch}{1.2}
\newcommand{\shaderowlieconv}{\rowcolor{DarkOrange!10}}
\newcommand{\shaderowlietran}{\rowcolor{ForestGreen!10}}
\begin{table*}[t!]
    \setlength\tabcolsep{3pt}
    \centering
    \small
\begin{tabular}{lcccccc} 
    \toprule[1pt]
    \multirow{1}{*}{\bfseries{Training data}} & 
     \multicolumn{1}{c}{$D_{\text{train}}$} &
 \multicolumn{1}{c}{$D_{\text{train}}$} &
 \multicolumn{1}{c}{$D_{\text{train}}$} &
 \multicolumn{1}{c}{$D_{\text{train}}^{T2}$} &
 \multicolumn{1}{c}{$D_{\text{train}}^{T2}$} &
 \multicolumn{1}{c}{$D_{\text{train}}^{SE2}$}\\ 
  \multicolumn{1}{l}{\bfseries{Test data}}& \multicolumn{1}{c}{$D_{\text{test}}$} & \multicolumn{1}{c}{$D_{\text{test}}^{T2}$} & \multicolumn{1}{c}{$D_{\text{test}}^{SE2}$} & \multicolumn{1}{c}{$D_{\text{test}}^{T2}$} & 
  \multicolumn{1}{c}{$D_{\text{test}}^{SE2}$} & 
  \multicolumn{1}{c}{$D_{\text{test}}^{SE2}$} \\
 \cmidrule(lr){1-7}
    
    SetTransformer&0.58 $\pm$ 0.07 & 0.44 $\pm$ 0.02 & 0.44 $\pm$ 0.02 & 0.61 $\pm$ 0.02 & 0.51 $\pm$ 0.01 & 0.55 $\pm$ 0.01 \\
    \shaderowlietran LieTransformer-T2 & \textbf{0.75} $\pm$ 0.03 & \textbf{0.75} $\pm$ 0.03 & 0.63 $\pm$ 0.06 & \textbf{0.75} $\pm$ 0.03 &0.63 $\pm$ 0.06 & 0.70 $\pm$  0.03 \\
    \shaderowlietran LieTransformer-SE2 & 0.71 $\pm$ 0.01 &0.71 $\pm$ 0.01 &  \textbf{0.69} $\pm$ 0.02 & 0.71 $\pm$ 0.01  &  \textbf{0.69} $\pm$ 0.02 &\textbf{0.72} $\pm$ 0.04  \\
    \bottomrule
  \end{tabular}
    \vspace{-1em}
    \caption{Mean and standard deviation of test accuracies on the shape counting task at convergence (over 8 random initialisations).}
    \label{tab:constellation_results}
\end{table*}
We first consider the toy, synthetic task of counting shapes in a 2D point cloud $\{x_1, x_2, ..., x_K\}$ of constellations \citep{kosiorek2019stacked}, mainly to check that \verb!LieTransformer! has the correct invariance properties. We use $\mathtt{f}_i=1$ for all points. Each example consists of points in the plane that form the vertices of a pattern. There are four types of patterns: triangles, squares, pentagons and the `L' shape, with varying sizes, orientation, and number of instances per pattern (see \autoref{fig:invariance-error} (right)). The task is to classify the number of instances of each pattern, hence is invariant to 2D roto-translations $SE(2)$.

We first create a fixed training set $D_{\text{train}}$ and test set $D_{\text{test}}$ of size 10,000 and 1,000 respectively. We then create augmented test sets $D^{T2}_{\text{test}}$ and $D^{SE2}_{\text{test}}$ that are copies of $D_{\text{test}}$ with arbitrary transformations in $T(2)$ and $SE(2)$ respectively. In \autoref{tab:constellation_results}, we evaluate the test accuracy of \verb!LieTransformer! at convergence with and without data augmentation during training time -- $D^{T2}_{\text{train}}$ and $D^{SE2}_{\text{train}}$ indicate random $T(2)$ and $SE(2)$ augmentations respectively to each batch of $D_{\text{train}}$ at every training iteration. 
We evaluate the test performance of \verb!LieTransformer-T2! and \verb!LieTransformer-SE2! that are invariant to $T(2)$ and $SE(2)$ respectively, against the baseline \verb!SetTransformer! \citep{LeeLeeKim2019a}, a Transformer-based model that is permutation invariant, but not invariant to rotations nor translations. We use a similar number of parameters for each model. See Appendix \ref{sec:constellation_setup} for further details on the setup.

\begin{figure}[h!]
     \centering
     \centering

     \raisebox{-0.5\height}{\includegraphics[width=0.48\linewidth]{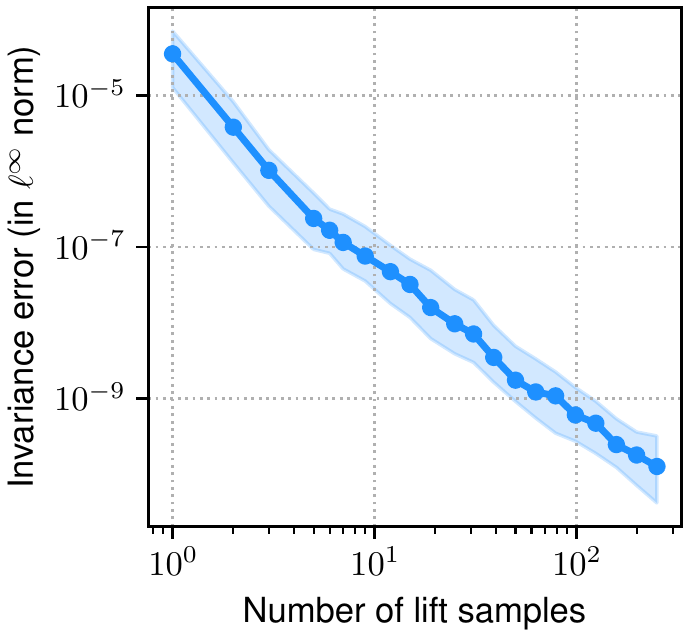}}
     \hspace{1em}
      \raisebox{-0.41\height}{\includegraphics[width=0.44\linewidth]{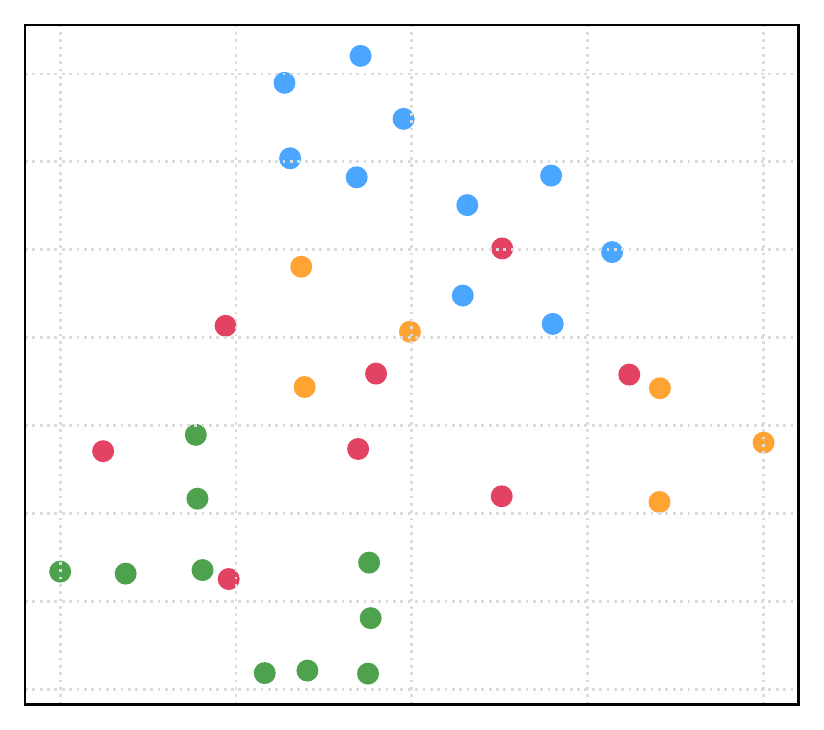}}
     \caption{(Left) $SE(2)$ invariance error on output logits vs. number of lift samples for a single layer \texttt{LieTransformer-SE2}. Plot shows median and interquartile range across 100 runs, randomizing over model seed, input example and transformation applied to input. (Right) An example 2D point cloud from $D_{\text{train}}$. Each colour corresponds to a different pattern.} \label{fig:invariance-error}
\end{figure}

Note that the test accuracy of \verb!LieTransformer-T2! and \verb!LieTransformer-SE2! remains unchanged when the train/test set is augmented with $T(2)$. For \verb!LieTransformer-SE2!, this is not quite true for $SE(2)$ augmentations because the model is only $SE(2)$ equivariant in expectation and not exactly equivariant given a finite number of lifts samples ($|\hat{H}|$). However the changes in accuracy for $SE(2)$ augmentation are much smaller compared to \verb!LieTransformer-T2!. The test accuracy of \verb!SetTransformer!, the non-invariant baseline, is always lower than \verb!LieTransformer!. Note that \verb!LieTransformer-T2! does slightly better than \verb!LieTransformer-SE2! on $D_{\text{test}}$ and $D^{T2}_{\text{test}}$. We suspect that the variance in the sampling of the lifting layer for \verb!LieTransformer-SE2! is making optimisation more difficult, and will continue to explore these results.

In \autoref{fig:invariance-error} (left), we report the equivariance error of \verb!LieTransformer-SE2! when increasing the number of lift samples ($|\hat{H}|$) used in the Monte Carlo approximation of \verb!LieSelfAttention!. As expected the invariance error decreases monotonically with the number of lift samples, and already with 3 lift samples, the error is small ($\approx 10^{-6}$).


\renewcommand{\arraystretch}{1.2}
\definecolor{fu-blue}{RGB}{0, 51, 102} 
\begin{table*}[t]
    \setlength\tabcolsep{3pt}
    \centering
    \small
    \begin{tabular}{l|rrrrrrrrrrrr}
       \toprule[1pt]
        Task & \( \alpha \) & \(\Delta \epsilon\) & \( \epsilon_{\text{HOMO}}\) &  \( \epsilon_{\text{LUMO}} \) & \( \mu \) & \( C_{\nu} \) & \( G \) & \( H \) & \( R^2 \) & \( U \) & \( U_0 \) & ZPVE \\
        Units & bohr$^3$ & meV & meV & meV & D & cal/mol K & meV & meV & bohr$^{2}$ & meV & meV & meV \\ \midrule[1pt]
        WaveScatt \citep{hirn2017wavelet} & $.160$ & $118$ & $85$ & $76$ & $.340$ & $.049$ & $-$ & $-$ & $-$ & $-$ & $-$ & $-$ \\ 
        NMP \citep{gilmer2017neural} & $\mathbf{.092}$ & $69$ & $43$ & $38$ & $\underline{\mathbf{.030}}$ & $.040$ & $19$ & $17$ & $.180$ & $20$ & $20$ & $\mathbf{1.50}$ \\ 
        SchNet \citep{schutt2017schnet} & $.235$ & $\mathbf{63}$ & $\mathbf{41}$ & $\mathbf{34}$ & $.033$ & $\mathbf{.033}$ & $\mathbf{14}$ & $\mathbf{14}$ & $\underline{\mathbf{.073}}$ & $\mathbf{19}$ & $\mathbf{14}$ & $1.70$ \\ \midrule
        Cormorant \citep{anderson2019cormorant} & $.085$ & $61$ & $34$ & $38$ & $.038$ & $.026$ & $20$ & $21$ & $.961$ & $21$ & $22$ & $2.03$ \\
        DimeNet++ \citep{klicpera2020fast} $^*$ & $\underline{\mathbf{.049}}$ & $\underline{\mathbf{34}}$ & $\underline{\mathbf{26}}$ & $\underline{\mathbf{20}}$ & $\mathbf{.033}$ & $\underline{\mathbf{.024}}$ & $\underline{\mathbf{8}}$ & $\underline{\mathbf{7}}$ & $.387$ & $\underline{\mathbf{7}}$ & $\underline{\mathbf{7}}$  & $\underline{\mathbf{1.23}}$ \\ 

        L1Net \citep{miller2020relevance} & $.088$ & $68$ & $45$ & $35$ & $.043$ & $.031$ & $14$ & $14$ & $\mathbf{.354}$ & $14$ & $13$ & $1.56$ \\
        \midrule
        TFN \citep{thomas2018tensor} & $.223$ & $58$ & $40$ & $38$ & $.064$ & $.101$ & $-$ & $-$ & $-$ & $-$ & $-$ & $-$ \\
        SE3-Transformer \citep{fuchs2020se} & $.148$ & $53$ & $36$ & $33$ & $.053$ & $.057$ & $-$ & $-$ & $-$ & $-$ & $-$ & $-$ \\
        \shaderowlieconv LieConv-T3 \citep{finzi2020generalizing} $^\dagger$ & $.125$ & $60$ & $36$ & $32$ & $.057$ & $.046$ & $35$ & $37$ & $1.54$ & $36$ & $35$ & $3.62$ \\
        \shaderowlieconv LieConv-T3 + SO3 Aug \citep{finzi2020generalizing} & $.084$ & $49$ & $30$ & $\mathbf{25}$ & $\mathbf{.032}$ & $.038$ & $22$ & $24$ & $.800$ & $19$ & $19$ & $2.28$ \\
        \shaderowlieconv LieConv-SE3 \citep{finzi2020generalizing}$^\dagger$ & $.097$ & $\mathbf{45}$ & $\mathbf{27}$ & $\mathbf{25}$ & $.039$ & $.041$ & $39$ & $46$ & $2.18$ & $49$ & $48$ & $3.27$ \\
        \shaderowlieconv LieConv-SE3 + SO3 Aug \citep{finzi2020generalizing}$^\dagger$ & $.088$ & $\mathbf{45}$ & $\mathbf{27}$ & $\mathbf{25}$ & $.038$ & $.043$ & $47$ & $46$ & $2.12$ & $44$ & $45$ & $3.25$ \\
        \shaderowlietran LieTransformer-T3 (Us) & $.179$ & $67$ & $47$ & $37$ & $.063$ & $.046$ & $27$ & $29$ & $.717$ & $27$ & $28$ & $2.75$ \\
        \shaderowlietran LieTransformer-T3 + SO3 Aug (Us) & $\mathbf{.082}$ & $51$ & $33$ & $27$ & $.041$ & $\mathbf{.035}$ & $\mathbf{19}$ & $\mathbf{17}$ & $\mathbf{.448}$ & $\mathbf{16}$ & $\mathbf{17}$ & $\mathbf{2.10}$\\
        \shaderowlietran LieTransformer-SE3 (Us)& $.104$  & $52$ & $33$ & $29$ & $.061$ & $.041$ & $23$ & $27$ & $2.29$ & $26$ & $26$ & $3.55$ \\
        \shaderowlietran LieTransformer-SE3 + SO3 Aug (Us) & $.105$  & $52$ & $33$ & $29$ & $.062$ & $.041$ & $22$ & $25$ & $2.31$ & $24$ & $25$ & $3.67$ \\
        \bottomrule[1pt]
    \end{tabular}
    \caption{QM9 molecular property prediction mean absolute error. \iffalse Upper section of the table are non-equivariant models designed specifically for molecular property prediction, middle section are equivariant models designed specifically for molecular property prediction, lower section are general purpose equivariant models. \fi Bold indicates best performance in a given section, underlined indicates best overall performance. $^*$These results are from our own runs of the Dimenet++ model. The original paper used different train/valid/test splits to the other papers listed here. $^\dagger$These results are from our owns runs of LieConv as these ablations were not present in the original paper.}
    \label{tab:QM9_results}
    \vspace{-1.5em}
\end{table*}

\subsection{QM9: Molecular Property Regression}

We apply the \verb!LieTransformer! to the QM9 molecule property prediction task \citep{ruddigkeit_enumeration_2012, ramakrishnan2014quantum}. This dataset consists of 133,885 small inorganic molecules described by the location and charge of each atom in the molecule, along with the bonding structure of the molecule. The dataset includes 19 properties of each molecule, such as various rotational constants, energies and enthalpies, and 12 of these are used as regression tasks. We expect these molecular properties to be invariant to 3D roto-translations $SE(3)$. We follow the customary practice of performing hyperparameter search on the $\epsilon_{\text{HOMO}}$ task and use the same hyperparameters for training on the other 11 tasks. Further details of the exact experimental setup can be found in Appendix \ref{sec:QM9_setup}.

We trained four variants of both \verb!LieTransformer! and \verb!LieConv!, namely the $T(3)$ and $SE(3)$ invariant models with and without $SO(3)$ (rotation) data augmentation. We set $x_i$ to be the atomic position and $\mathtt{f}_i$ to be the charge.
Table \ref{tab:QM9_results} shows the test error of all models and baselines on the 12 tasks. The table is divided into 3 sections. \textbf{Upper}: non-invariant models specifically designed for the QM9 task. \textbf{Middle}: invariant models specifically designed for the QM9 task. \textbf{Lower}: invariant models that are general-purpose. We show very competitive results, and perform best of general-purpose models on 8/12 tasks. In particular when comparing against \verb!LieConv!, we see better performance on the majority of tasks, suggesting that the attention framework is better suited to these tasks than convolutions.

As expected for both \verb!LieTransformer! and \verb!LieConv!, the $SE(3)$ models tend to outperform the $T(3)$ models without $SO(3)$ data augmentation (on 10/12 tasks and 7/12 tasks respectively), showing that being invariant to rotations improves generalisation. Moreover the $SE(3)$ models perform similarly with and without augmentation, whereas the $T(3)$ models greatly benefit from augmentation, showing evidence that the $SE(3)$ models are indeed invariant to rotations. However the $T(3)$ models with augmentation outperform the $SE(3)$ counterparts on most tasks for both \verb!LieTransformer-SE3! and \verb!LieConv-SE3!. As for the experiments in Section \ref{sec:constellation}, we suspect that the variance in the sampling of the lifting layer of $SE(3)$ models, along with the $SE(3)$ log-map (\autoref{sec:logmap}) in the location attention is making optimisation more difficult, and plan to continue investigating the source of this discrepancy in performance. Note however that \verb!LieTransformer-SE3! and \verb!LieConv-SE3! tend to outperform the irreducible representation (irrep) based $SE(3)$-Transformer and TFN. This can be seen as further evidence that regular representation approaches tend to outperform irrep approaches, in line with the empirical observations of \citet{weiler2019general}.

\subsection{Modelling Particle Trajectories with Hamiltonian Dynamics}
\label{sec:hamiltonian}
We also apply the \verb!LieTransformer! to a physics simulation task in the context of Hamiltonian dynamics, a formalism for describing the evolution of a physical system using a single scalar function $H(q,p)$, called the Hamiltonian. 

We consider the case of $n$ particles in $d$ dimensions, writing the position $\mathbf{q} \in \mathbb{R}^{nd}$ and momentum $\mathbf{p} \in \mathbb{R}^{nd}$ of all particles as a single state $\mathbf{z} = (\mathbf{q}, \mathbf{p})$. The Hamiltonian $H: \mathbb{R}^{2nd} \to \mathbb{R}$ takes as input $\mathbf{z}$ and returns its total (potential plus kinetic) energy. The time evolution of the particles is then given by \textit{Hamilton's equations}:
\begin{equation} \label{eq:hamiltons-eqs}
\frac{\mathrm{d}\mathbf{q}}{\mathrm{d}t} = \frac{\partial H}{\partial \mathbf{p}}, ~~
\frac{\mathrm{d}\mathbf{p}}{\mathrm{d}t} = -\frac{\partial H}{\partial \mathbf{q}}.
\end{equation}
Several recent works have shown that modelling physical systems by learning its Hamiltonian significantly outperforms approaches that learn the dynamics directly \citep{greydanus2019hamiltonian, sanchez2019hamiltonian, zhong2019symplectic, finzi2020generalizing}. Specifically, we can parameterise the Hamiltonian of a system by a neural network $H_\theta$ that is learned by ensuring that trajectories from the ground truth and learned system are close to each other. Given a learned $H_\theta$, we can simulate the system for $T$ timesteps by solving equation (\ref{eq:hamiltons-eqs}) with a numerical ODE solver to obtain a trajectory $\{\hat{\mathbf{z}}_t(\theta)\}_{t=1}^{T}$ and minimize the $\ell^2$-norm between this trajectory and the ground truth $\{\mathbf{z}_t\}_{t=1}^{T}$.

However we know a-priori that such physical systems have symmetries, namely conserved quantities such as linear and angular momentum. A notable result is Noether's theorem \citep{noether1971invariant}, which states that the system has a conserved quantity if and only if the Hamiltonian is group-invariant. For example, translation invariance of the Hamiltonian implies conservation of momentum and rotation invariance implies conservation of angular momentum. Hence in our experiments, we parameterise the Hamiltonian $H_\theta$ by a \verb!LieTransformer! and endow it with the symmetries corresponding to the conservation laws of the physical system we are modelling. We test our model on the spring dynamics task proposed in \citet{sanchez2019hamiltonian} -- we consider a system of $6$ particles with randomly sampled massses in 2D, where each particle connected to all others by springs. This system conserves both linear and angular momentum, so the ground truth Hamiltonian will be both translationally and rotationally invariant, that is, $SE(2)$-invariant. We simulate this system for 500 timesteps from random initial conditions and use random subsets of length 5 from these roll-outs to train the model (see Appendix \ref{sec:Hamiltonian_setup} for full experimental details).

We compare our method to different parameterisations of $H_\theta$, namely Fully-connected network \citep{chen2018neural}, Graph Network \citep{sanchez2019hamiltonian} and \verb!LieConv!. Only \verb!LieTransformer! and \verb!LieConv! incorporate invariance. In Figures \ref{fig:hamiltonian-data-efficiency}, \ref{fig:hamiltonian-1e4-rollout}, and \ref{fig:hamiltonian-trajectories}, we use \verb!LieTransformer-T2! and \verb!LieConv-T2! since \citet{finzi2020generalizing} report that there are numerical instabilities for \verb!LieConv-SE2! on this task, due to which \verb!LieConv-T2! is their default model and performs the best. However in Figure \ref{fig:hamiltonian-parameter-efficiency-5-step}, we also consider $SE(2)$-invariant versions of both models with modifications to the lifting procedure, which fixed the instabilities as outlined in Appendix \ref{sec:logmap}.

\begin{figure}[t]
    \centering
    \includegraphics[width=.49\textwidth]{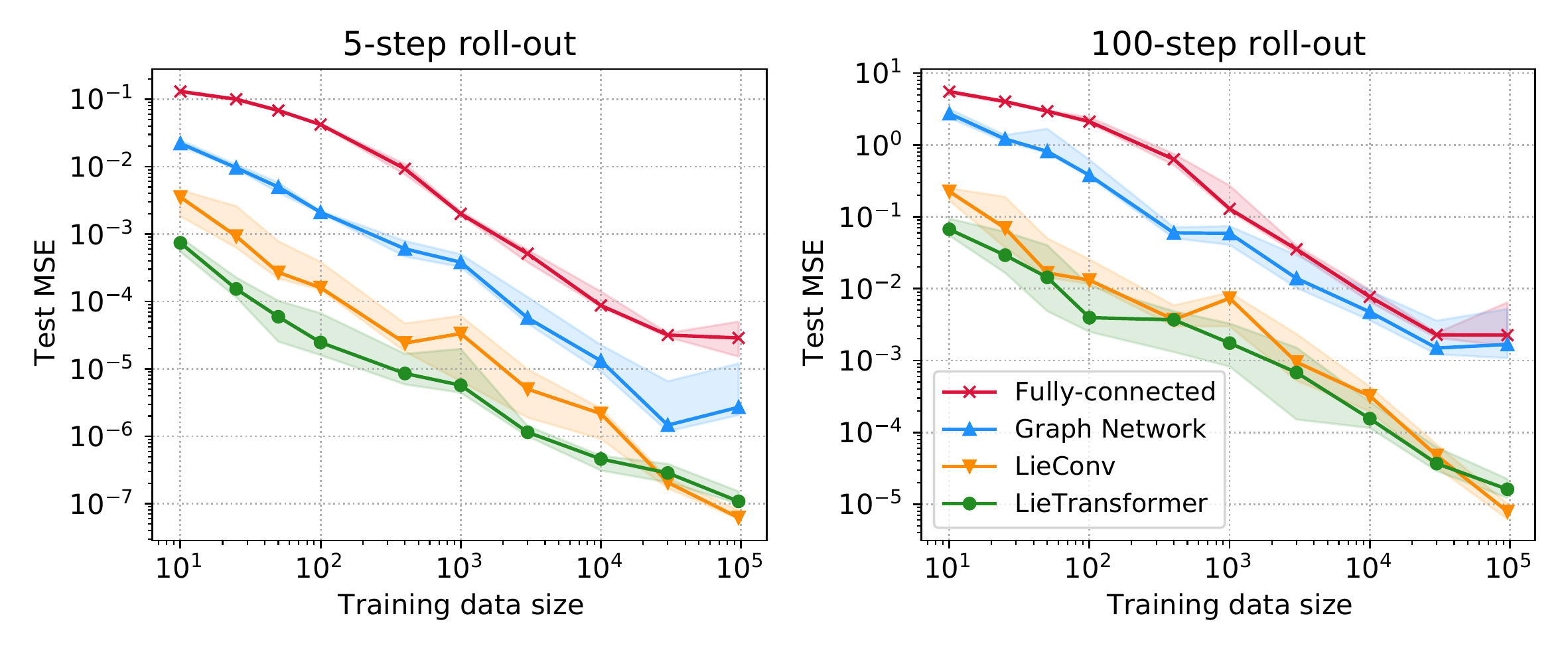}
    \caption{Data efficiency on Hamiltonian spring dynamics. All models are trained using 5-step roll-outs, with test performance on 5-step (left) and 100-step (right) roll-outs. Plots show median MSE and interquartile range (IQR) across 10 random seeds.}
    \label{fig:hamiltonian-data-efficiency}
    \vspace{-10pt}
\end{figure}


Figure \ref{fig:hamiltonian-data-efficiency} compares the performance of all methods as a function of the number of training examples. \verb!LieTransformer! is highly data-efficient: the inductive bias from the symmetries of the Hamiltonian allow us to accurately learn the dynamics even from a small training set. Our method consistently outperforms non-invariant methods (fully-connected and graph networks), typically by 1-3 orders of magnitude. Furthermore, our method outperforms \verb!LieConv! for most data sizes except the largest sizes where the errors are similar, suggesting that the attention framework more suited for this task.

\begin{wrapfigure}{r}{0.25\textwidth}
    \centering
    \includegraphics[width=0.25\textwidth]{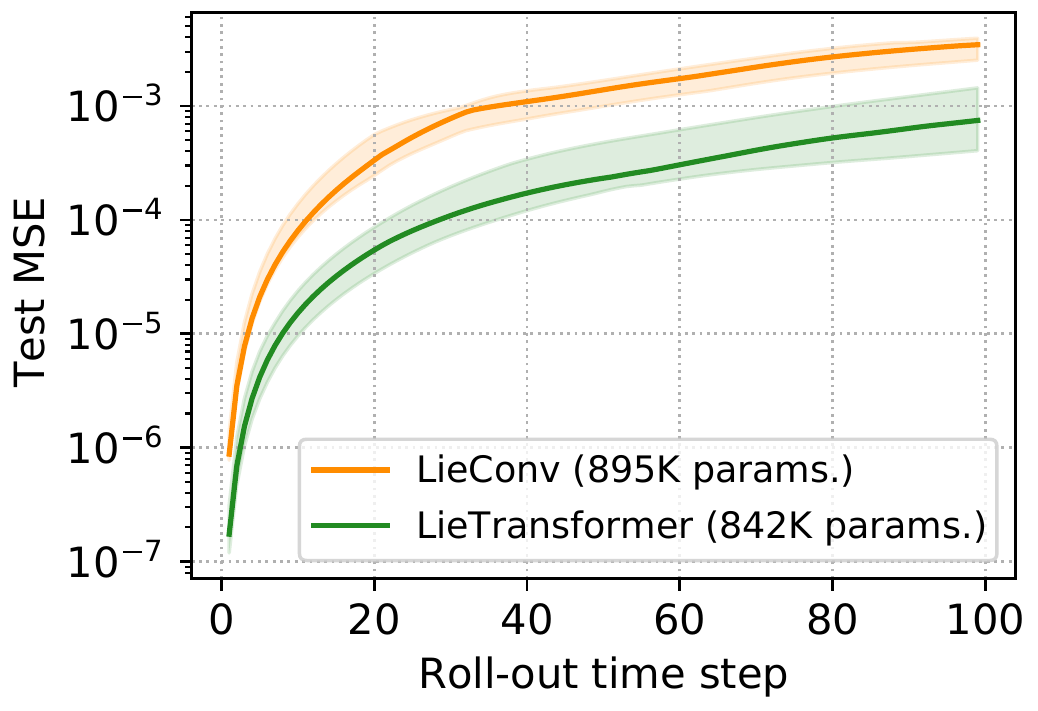}
    \caption{Test error vs. roll-out time step. Plots show median MSE and IQR across 10 random seeds.}
    \label{fig:hamiltonian-1e4-rollout}
\end{wrapfigure}

Figure \ref{fig:hamiltonian-1e4-rollout} shows the test error as function of the roll-out time step for a training data size of 10,000 (corresponding plots for other training data sizes are included in Appendix \ref{sec:hamiltonian_extra}). Here we show that the \verb!LieTransformer! shows better generalisation than \verb!LieConv! across all roll-out lengths, the error being low ($<10^{-3}$) for 100 step-roll-outs even though we only train on 5-step roll-outs. We also include example trajectories of our model in Figure \ref{fig:hamiltonian-trajectories} (more examples can be found in the appendix, including ones where \verb!LieConv! performs better than \verb!LieTransformer!) illustrating the accuracy of our model on this task.

\begin{figure}[t]
    \centering
    \includegraphics[width=.4\textwidth]{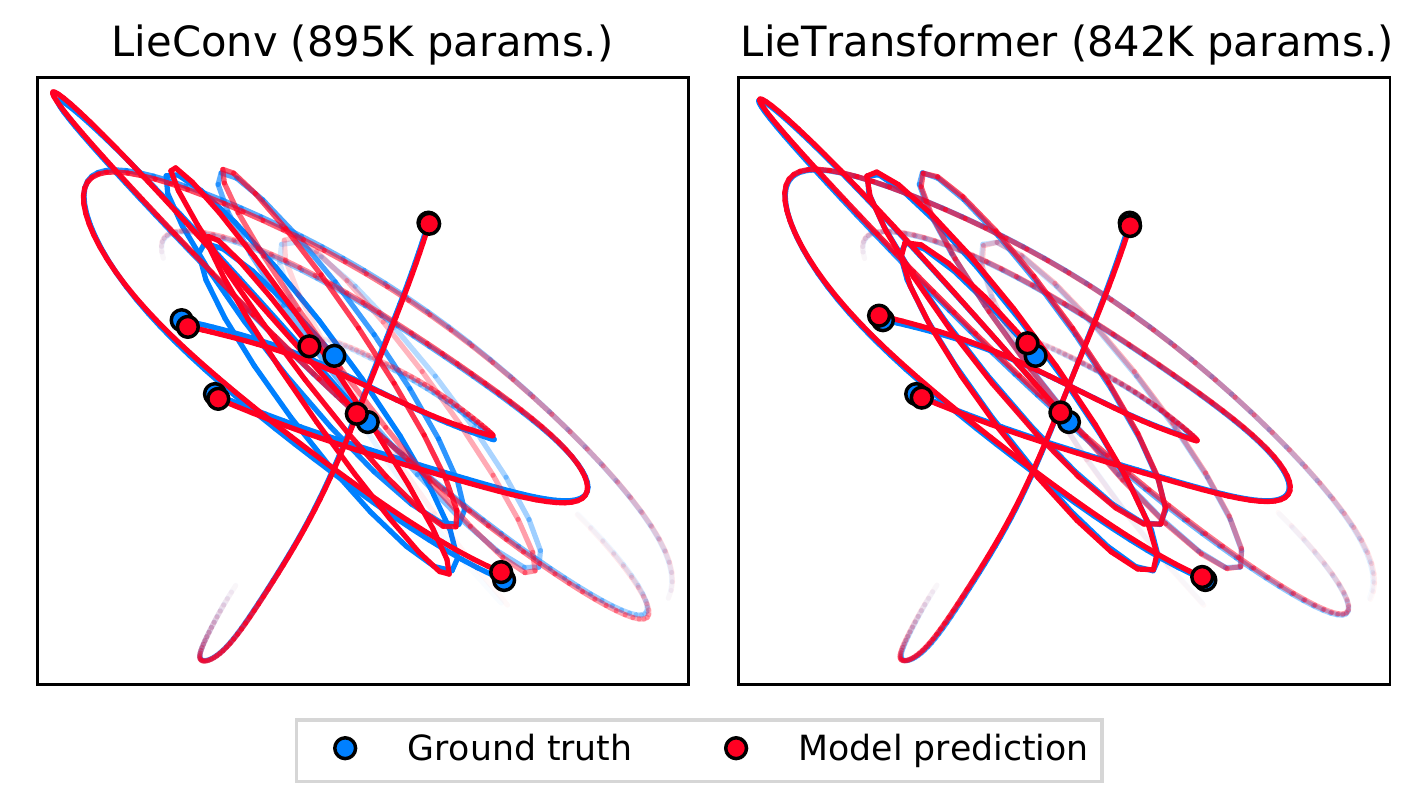}
    \caption{Example trajectory predictions on the spring dynamics task. LieTransformer closely follows the ground truth while LieConv diverges from the ground truth at later timesteps.}
    \label{fig:hamiltonian-trajectories}
    \vspace{-10pt}
\end{figure}

Lastly, Figure \ref{fig:hamiltonian-parameter-efficiency-5-step} compares \verb!LieConv! and \verb!LieTransformer! for different model sizes (number of parameters) and equivariance groups. We first note \verb!LieTransformer! outperforms \verb!LieConv! given a fixed model size and group. For $T(2)$-invariant models, our method benefits from a larger model, whereas \verb!LieConv! deteriorates (\verb!LieConv-T(2) (895K)! is their default architecture on this task). However, for both methods, the $SE(2)$-invariant models perform at par with or better than their $T(2)$-invariant counterparts despite having smaller model sizes. In particular, \verb!LieTransformer-SE(2) (139K)! outperforms all other models in this comparison despite having the smallest number of parameters, which highlights the advantage of incorporating the correct task symmetries into the architecture and the attention framework. Overall, we have shown that our model is suitable for use in a neural ODE setting that requires equivariant drift functions. 

\section{Limitations and Future Work}

From the algorithmic perspective, \verb!LieTransformer! shares the weakness of \verb!LieConv! in being memory-expensive ($O(|\hat{G}_f||\mathtt{nbhd}_{\eta}|)$ memory cost (\autoref{sec:memory}) due to:  1. The lifting procedure that increases the number of inputs by $|\hat{H}|$, and 2. Quadratic complexity in the number of inputs from having to compute the kernel value at each pair of inputs. Although the first is a weakness shared by all lifting-based equivariant neural networks, the second can be addressed by incorporating works that study efficient variants of self-attention \citep{wang2020linformer, kitaev2020reformer, zaheer2020big, katharopoulos2020transformers}. An alternative is to incorporate information about pairs of inputs (such as bonding information for the QM9 task) as masking in self-attention (c.f. Appendix \ref{sec:QM9_setup}).

From the methodological perspective, a key weakness of the \verb!LieTransformer! that is also shared with \verb!LieConv! is its approximate equivariance due to MC estimation of the integral in \verb!LieSelfAttention! for the case where $H$ is infinite. The aforementioned directions for memory-efficiency can help to reduce the approximation error by allowing to use more lift samples ($|\hat{H}|$). Other directions include incorporating the notion of \emph{steerability} \citep{cohen2016steerable} to deal with vector fields in an equivariant manner (given inputs $(x_i,\mathtt{f}_i)$, the group acts non-trivially on $\mathtt{f}_i$ as well as $x_i$), and extending to non-homogeneous input spaces as outlined in \citet{finzi2020generalizing}.

\begin{figure}[t]
    \centering
    \includegraphics[width=.46\textwidth, trim=0mm 0mm 0mm 16.4mm, clip]{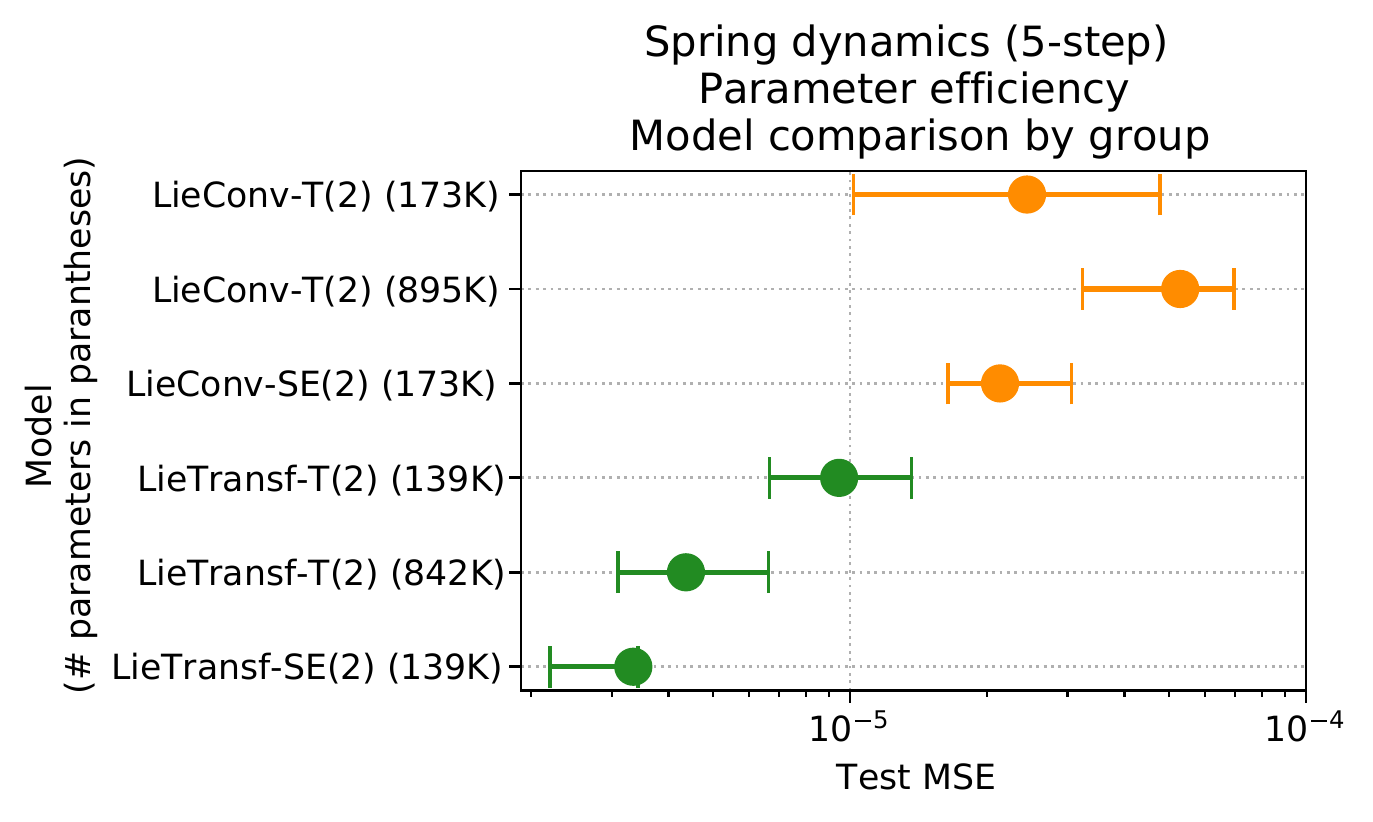}
    \caption{Median test MSE and IQR on 5-step trajectories, across 5 random seeds. Results for 100-step trajectories in Figure \ref{fig:hamiltonian-parameter-efficiency-100-step}.}
    \vspace{-4mm}
    \label{fig:hamiltonian-parameter-efficiency-5-step}
\end{figure}

\subsubsection*{Acknowledgements}
The authors would like to thank Adam R.Kosiorek for setting up the initial codebase at the beginning of the project, and David W. Romero \& Jean-Baptiste Cordonnier for useful discussions. Michael is supported by the EPSRC Centre for Doctoral Training in Modern Statistics and Statistical Machine Learning (EP/S023151/1).
Charline acknowledges funding from the EPSRC grant agreement no. EP/N509711/1. Sheheryar wishes to acknowledge support from Aker Scholarship. Emilien acknowledges support of his PhD funding from Google DeepMind. Yee Whye Teh’s research leading to these results has received funding from the European Research Council under the European Union’s Seventh Framework Programme (FP7/2007-2013) ERC grant agreement no. 617071.

We would also like to thank
the Python community~\citep{van1995python,oliphant2007python} for developing
the tools that enabled this work, including Pytorch \citep{paszke2017automatic},
{NumPy}~\citep{oliphant2006guide,walt2011numpy, harris2020array},
{SciPy}~\citep{jones2001scipy}, and
{Matplotlib}~\citep{hunter2007matplotlib}.



\bibliography{icml}
\bibliographystyle{icml2021}

\appendix
\clearpage
{
\appendix
\onecolumn
{\Large \bf Appendix}

\section{Contributions}
\label{sec:contributions}
\begin{itemize}
\item Charline and Yee Whye conceived the project and Yee Whye initially came up with an equivariant form of self-attention.
\item Through discussions between Michael, Charline, Hyunjik and Yee Whye, this was modified to the current \verb!LieSelfAttention! layer, and Michael derived the equivariance of the \verb!LieSelfAttention! layer.
\item Michael, Sheheryar and Hyunjik simplified the proof of equivariance and further developed the methodology for the \verb!LieTransformer! in its current state, and created links between \verb!LieTransformer! and other related work.
\item Michael wrote the initial framework of the \verb!LieTransformer! codebase. Charline and Sheheryar wrote the code for the shape counting experiments, Michael wrote the code for the QM9 experiments, Sheheryar wrote the code for the Hamiltonian dynamics experiments, after helpful discussions with Emilien.
\item Charline carried out the experiments for Table 1, Michael carried out most of the experiments for Table 2 with some help from Hyunjik, Sheheryar carried out the experiments for Figure 3b and all the Hamiltonian dynamics experiments.
\item Hyunjik wrote all sections of the paper except the experiment sections: the shape counting section was written by Charline, the QM9 section by Michael and the Hamiltonian dynamics section was written by Emilien and Sheheryar.
\end{itemize}

\section{Formal definitions for Groups and Representation Theory}
\label{sec:def}

\begin{definition}
A \textbf{group} $\grp$ is a set endowed with a single operator $\op : \grp \times \grp \mapsto \grp$ such that
\begin{enumerate}
\item Associativity: $\forall g, g', g'' \in \grp,
(g \op g') \op g'' = g \op ( g' \op g'' )$
\item Identity: $\exists e \in \grp, \, \forall g \in \grp \, \,  g\op \id = \id \op g = g$
\item Invertibility: $\forall g \in \grp, \exists g^{-1} \in \grp, g \op g^{-1} = g^{-1} \op g = \id $
\end{enumerate}
\end{definition}

\begin{definition}
A \textbf{Lie group} is a finite-dimensional real smooth manifold, in which group multiplication and inversion are both smooth maps.
\end{definition}
The general linear group $GL(n, \mathbb{R})$ of invertible $n\times n$ matrices is an example of a Lie group.

\begin{definition}
Let $S$ be a set, and let $\operatorname{Sym}(S)$ denote the set of invertible functions from $S$ to itself. We say that a group $\grp$ \textbf{acts} on $S$ via an action
$\rho: G \rightarrow \operatorname{Sym}(S)$ when $\rho$ is a group \textbf{homomorphism}: $\rho(g_1 g_2)(s) = (\rho(g_1) \circ \rho(g_2))(s) ~ \forall s \in S$. 

If $S$ is a vector space $V$ and this action is, in addition, a \emph{linear} function, i.e. $\rho:G \rightarrow GL(V)$, where $GL(V)$ is the set of linear invertible functions from $V$ to itself, then we say that $\rho$ is a \textbf{representation} of $\grp$.
\end{definition}

\section{Proofs}
\label{sec:proof}

\begin{lemma}
\label{lemma:composition}
The function composition $f \circ f_K \circ ... \circ f_1$ of several equivariant functions $f_k$, $k \in \{1, 2, ..., K \}$ followed by an invariant function $f$, is an invariant function.
\end{lemma}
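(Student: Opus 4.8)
The plan is to unwind the composition one layer at a time, pushing the group action through each equivariant map until it reaches the final invariant map, where it is discarded. First I would fix notation for the chain: write it as $V_0 \xrightarrow{f_1} V_1 \xrightarrow{f_2} \cdots \xrightarrow{f_K} V_K \xrightarrow{f} W$, and let $\rho_k$ denote the representation of $G$ acting on $V_k$, so that each $f_k$ is equivariant with respect to $\rho_{k-1}$ and $\rho_k$, i.e.\ $f_k[\rho_{k-1}(g) v] = \rho_k(g) f_k[v]$, and $f$ is invariant with respect to $\rho_K$, i.e.\ $f[\rho_K(g) v] = f[v]$ (the special case of equivariance in which the output representation is trivial, as noted in the text).

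The key step is an induction on the number of equivariant maps, establishing that $f_K \circ \cdots \circ f_1$ is itself $G$-equivariant carrying $\rho_0$ to $\rho_K$. The base case is just the equivariance of $f_1$, namely $f_1[\rho_0(g) v] = \rho_1(g) f_1[v]$. For the inductive step I assume $(f_k \circ \cdots \circ f_1)[\rho_0(g) v] = \rho_k(g)\,(f_k \circ \cdots \circ f_1)[v]$, apply $f_{k+1}$ to both sides, and use equivariance of $f_{k+1}$ to pull the action out as $\rho_{k+1}(g)$. This yields $(f_{k+1} \circ \cdots \circ f_1)[\rho_0(g) v] = \rho_{k+1}(g)\,(f_{k+1} \circ \cdots \circ f_1)[v]$, completing the induction at $k = K$.

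Finally I would compose with the invariant map $f$. Writing $w \triangleq (f_K \circ \cdots \circ f_1)[v]$, the equivariance just established gives $(f_K \circ \cdots \circ f_1)[\rho_0(g) v] = \rho_K(g)\, w$, and invariance of $f$ then gives $f[\rho_K(g) w] = f[w]$. Hence $\Phi[\rho_0(g) v] = f[w] = \Phi[v]$ for every $g \in G$ and $v \in V_0$, which is exactly invariance of $\Phi = f \circ f_K \circ \cdots \circ f_1$.

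I do not expect any genuine obstacle here; the argument is essentially bookkeeping. The one thing to be careful about is matching the intermediate representations so that the output action of $f_k$ is indeed the input action of $f_{k+1}$ — this is implicit in the assumption that the maps are composable and layerwise equivariant. The mechanism that makes invariance work is precisely that the final map's output representation is trivial, so the accumulated factor $\rho_K(g)$ is annihilated rather than propagated.
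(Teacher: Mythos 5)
Your proof is correct and follows essentially the same route as the paper's: both push the group action through the chain of equivariant maps one layer at a time (you formalise the paper's ``$\vdots$'' as an explicit induction showing $f_K \circ \cdots \circ f_1$ intertwines $\rho_0$ with $\rho_K$) and then use invariance of the final map $f$ to absorb the accumulated action $\rho_K(g)$. The only difference is cosmetic: the paper writes the argument as an operator identity $f \circ f_K \circ \cdots \circ f_1 \circ \pi_0 = f \circ f_K \circ \cdots \circ f_1$, while you evaluate pointwise on $v \in V_0$.
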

\begin{proof}
Consider group representations $\pi_1, \ldots, \pi_K$ that act on $f_1, \ldots f_K$ respectively, and representation $\pi_0$ that acts on the input space of $f_1$. If each $f_k$ is equivariant with respect to $\pi_k, \pi_{k-1}$ such that $f_k \circ \pi_{k-1} = \pi_k \circ f_k$, and $f$ is invariant such that $f \circ \pi_k = f$, then we have
\begin{align*}
    f \circ f_k \circ ... \circ f_1 \circ \pi_0 &= f \circ f_k \circ ... \circ \pi_1  \circ f_1 \\
    & ~~ \vdots \\
    & = f \circ \pi_k \circ f_k \circ ... \circ f_1 \\
    & = f \circ f_k \circ ... \circ f_1,
\end{align*}
hence $f \circ f_K \circ ... \circ f_1$ is invariant.
\end{proof}

\begin{lemma}
The group equivariant convolution $\Psi:\mathcal{I}_U \rightarrow \mathcal{I}_U$  defined as: $[\Psi f](g) \triangleq \int_{G} \psi(g'^{-1}g)f(g') dg'$ is equivariant with resepct to the regular representation $\pi$ of $G$ acting on $\mathcal{I}_U$ as $[\pi(u) f](g) \triangleq f(u^{-1} g)$.
\end{lemma}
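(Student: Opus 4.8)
The plan is to verify the definition of equivariance directly, showing that $\Psi \circ \pi(u) = \pi(u) \circ \Psi$ for every $u \in G$ by expanding both composites as integrals over $G$ and matching them through a single change of variables. First I would fix $u \in G$ and an arbitrary $f \in \mathcal{I}_U$, then evaluate the left-hand side at a point $g$ using only the definitions of $\Psi$ and of the regular representation:
\[
    [\Psi[\pi(u)f]](g) = \int_G \psi(g'^{-1}g)\,[\pi(u)f](g')\,dg' = \int_G \psi(g'^{-1}g)\,f(u^{-1}g')\,dg'.
\]

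The key step is the substitution $g' = uh$, i.e. $h = u^{-1}g'$. Since the integral is taken with respect to the left Haar measure, this substitution is measure-preserving, $dg' = dh$, and it rewrites the argument of the filter as $g'^{-1}g = (uh)^{-1}g = h^{-1}u^{-1}g$. Carrying this out gives
\[
    [\Psi[\pi(u)f]](g) = \int_G \psi(h^{-1}u^{-1}g)\,f(h)\,dh.
\]
I would then expand the right-hand side, $[\pi(u)[\Psi f]](g) = [\Psi f](u^{-1}g) = \int_G \psi(g'^{-1}u^{-1}g)\,f(g')\,dg'$, and observe that it coincides with the previous display after renaming the dummy variable $h$ to $g'$. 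As $g$ and $f$ were arbitrary, this proves $\Psi[\pi(u)f] = \pi(u)[\Psi f]$, which is exactly equivariance with respect to $\pi$.

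The only nontrivial ingredient is the left-invariance of the Haar measure: it is precisely what makes the map $g' \mapsto uh$ measure-preserving, and hence it is the reason the convolution in \autoref{eq:conv} is defined against the left (rather than right) Haar measure. This is the one point to treat carefully; for discrete groups it degenerates to the elementary observation that left-multiplication by $u$ merely permutes the elements of $G$, so the defining sum is unchanged, and no measure-theoretic input is needed.
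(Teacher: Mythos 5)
Your proof is correct and takes essentially the same route as the paper's: both expand $\Psi[\pi(u)f](g)$, apply the change of variables $g' \mapsto ug'$ justified by left-invariance of the Haar measure, and recognise the result as $[\Psi f](u^{-1}g) = [\pi(u)[\Psi f]](g)$. Your write-up is in fact slightly cleaner, since you name the substitution explicitly and note the discrete-group specialisation, whereas the paper leaves the substitution implicit in a one-line remark.
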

\begin{proof}
\begin{align*}
    \Psi[\pi(u)f](g) 
    &= \int_{G} \psi(g'^{-1}g) [\pi(u)f](g') dg' \\
    &= \int_{uG} \psi(g'^{-1}g) f(u^{-1}g') dg' \\
    &= \int_{G} \psi(g'^{-1}u^{-1}g) f(g') dg' \\
    &= [\Psi f](u^{-1}g \\
    &= [\pi(u) [\Psi f]](g).
\end{align*}
The second equality holds by invariance of the left Haar measure.
\end{proof}

\eqlifting*
\begin{proof}
Note $\mathcal{L}[\pi(u)f_{\mathcal{X}}](g) = \mathtt{f}_i$ for $g \in s(ux_i)H$ and $[\pi(u)\mathcal{L}[f_\mathcal{X}]](g) = \mathcal{L}[f_\mathcal{X}](u^{-1}g) = \mathtt{f}_i$ for $g \in us(x_i)H$. Hence $\mathcal{L}[\pi(u)f_{\mathcal{X}}] = \pi(u)\mathcal{L}[f_\mathcal{X}]$ because the two cosets are equal: $s(ux_i)H = u s(x_i)H ~ \forall u \in G$. Note that this holds because:
\begin{itemize}
    \item If $g \in s(x_i)H = \{g \in G|gx_0 = x_i \}$, then $g$ maps $x_0$ to $x_i$, hence  $ug$ maps $x_0$ to $ux_i$. So if $g \in s(x_i)H$ then $ug \in s(ux_i)H= \{g \in G|gx_0 = ux_i \}$, the set of all $g$ that map $x_0$ to $ux_i$. In summary, $us(x_i)H \subset s(ux_i)H$
    \item Conversely, if $g \in s(ux_i)H$, then we know that $u^{-1}g$ maps $x_0$ to $x_i$, so $u^{-1}g \in s(x_i)H$, hence $g \in us(x_i)H$. In summary, $s(ux_i)H \subset us(x_i)H$
    \item We have shown $us(x_i)H \subset s(ux_i)H$ and $s(ux_i)H \subset us(x_i)H$, thus $s(ux_i)H = us(x_i)H$.
\end{itemize}
\end{proof}

\eqattention*
\begin{proof}
Let $\mathcal{I}_U = \mathcal{L}(G, \mathbb{R}^D)$ be the space of unconstrained functions $f: G \rightarrow \mathbb{R}^D$. We can define the regular representation $\pi$ of $G$ acting on $\mathcal{I}_U$ as follows:
\begin{equation}
    [\pi(u) f](g) = f(u^{-1} g)
\end{equation}
$f$ is defined on the set $G_f = \cup_{i=1}^n s(x_i)H $ (i.e. union of cosets corresponding to each $x_i$). Note $G_{\pi(u) f} = u G_f$, and $G_f$ does not depend on the choice of section $s$.

Note that for all provided choices of $k_c$ and $k_l$, we have:
\begin{align}
    k_c([\pi(u)f](g),[\pi(u)f](g')) &= k_c(f(u^{-1}g),f(u^{-1}g')) \\
    k_l(g^{-1}g') &= k_l((u^{-1}g)^{-1}(u^{-1}g'))
\end{align}
Hence for all choices of $F$, we have that
\begin{align}
    \alpha_{\pi(u) f}(g,g') 
    &= F(k_c([\pi(u)f](g),[\pi(u)f](g')), k_l(g^{-1}g')) \nonumber \\
    &= F(k_c(f(u^{-1}g),f(u^{-1}g')), k_l((u^{-1}g)^{-1} u^{-1}g')) \nonumber \\
    &= \alpha_{f}(u^{-1}g,u^{-1}g') \label{eq:alpha_action}
\end{align}
We thus prove equivariance for the below choice of \verb!LieSelfAttention! $\Phi: \mathcal{I}_U \rightarrow \mathcal{I}_U$ that uses softmax normalisation, but a similar proof holds for constant normalisation. Let $A_f(g,g') \triangleq \exp(\alpha_f(g,g'))$, hence Equation (\ref{eq:alpha_action}) also holds for $A_f$.
\begin{align}
    [\Phi f](g) &= \int_{G_f} w_f(g,g')f(g') dg' \\ &= \int_{G_f} \frac{A_f(g,g')}{\int_{G_f} A_f(g,g'') dg''} f(g') dg'
\end{align}

Hence:
\begin{align}
    w_{\pi(u) f}(g,g') 
    &=  \frac{A_{\pi(u) f}(g,g')}{\int_{G_{\pi(u) f}} A_{\pi(u) f}(g,g'') dg''} \nonumber \\
    &= \frac{A_f(u^{-1}g,u^{-1}g')}{\int_{uG_f} A_{f}(u^{-1}g,u^{-1}g'') dg''}  \nonumber \\
    &= \frac{A_f(u^{-1}g,u^{-1}g')}{\int_{G_f} A_{f}(u^{-1}g,g'') dg''} \nonumber \\
    &= w_f(u^{-1}g,u^{-1}g') \label{eq:w_action}
\end{align}

Then we can show that $\Phi$ is equivariant with respect to the representation $\pi$ as follows: 
\begin{align}
    \Phi[\pi(u)f](g) &= \int_{G_{\pi(u)f}} w_{\pi(u) f}(g,g') [\pi(u)f](g') dg' \nonumber \\
    &= \int_{u G_f} w_f(u^{-1}g,u^{-1}g') f(u^{-1}g') dg' \nonumber \\
    &= \int_{G_f} w_f(u^{-1}g,g') f(g') dg'\nonumber \\
    &= [\Phi f](u^{-1}g) \nonumber \\
    &= [\pi(u) [\Phi f]](g)
\end{align}
\end{proof}

Equivariance holds for any $\alpha_f$ that satisfies Equation (\ref{eq:alpha_action}). Multiplying $\alpha_f$ by an indicator function $\mathds{1}\{d(g,g') < \lambda\}$ where $d(g,g')$ is some function of $g^{-1}g'$, we can show that \textit{local} self-attention that restricts attention to points in a neighbourhood also satisfies equivariance. When approximating the integral with Monte Carlo samples (equivalent to replacing $G_f$ with $\hat{G}_f$) we obtain a self-attention layer that is equivariant in expectation for constant normalisation of attention weights (i.e. $\mathbb{E}[\hat{\Phi}[\pi(u)f](g)] = \Phi[\pi(u)f](g) =[\pi(u) [\Phi f]](g)$ where $\hat{\Phi}$ is the same as $\Phi$ but with $\hat{G}_f$ instead of $G_f$). However for softmax normalisation we obtain a biased estimate due to the nested MC estimate in the denominator's normalising constant.

\section{Introduction to Self-Attention} \label{sec:sa}
\textbf{Self-attention} \citep{vaswani2017attention} is a mapping from an input set of $N$ vectors $\{x_1, \ldots, x_N\}$, where $x_i \in \mathbb{R}^D$, to an output set of $N$ vectors in $\mathbb{R}^D$. Let us represent the inputs as a matrix $X \in \mathbb{R}^{N \times D}$ such that the $i$th row $X_{i:}$ is $x_i$. \textbf{Multihead self-attention} (MSA) consists of $M$ \textbf{heads} where $M$ is chosen to divide $D$. The output of each head is a set of $N$ vectors of dimension $D/M$, where each vector is obtained by taking a weighted average of the input vectors $\{x_1, \ldots, x_N\}$ with weights given by a weight matrix $W$, followed by a linear map $W^V \in \mathbb{R}^{D \times D/M}$.
Using $m$ to index the head ($m=1, \ldots, M$), the output of the $m$th head can be written as:
\begin{gather*}
    f^m(X) \triangleq WXW^{V,m} \in \mathbb{R}^{N \times D/M} \\
    \text{where} \quad W \triangleq \mathtt{softmax}(X W^{Q,m} (XW^{K,m})^\top) \in \mathbb{R}^{N \times N}
\end{gather*}
where $W^{Q,m}, W^{K,m}, W^{V,m} \in \mathbb{R}^{D \times D/M}$ are learnable parameters, and the softmax normalisation is performed on each row of the matrix $X W^{Q,m} (XW^{K,m})^\top \in  \mathbb{R}^{N \times N}$.
Finally, the outputs of all heads are concatenated into a $N \times D$ matrix and then right multiplied by $W^O \in \mathbb{R}^{D \times D}$. Hence MSA is defined by:
\begin{equation}
    MSA(X) \triangleq [f^1(X), \ldots, f^M(X)] W^O \in \mathbb{R}^{N \times D}.
\end{equation}
Note $X W^Q (XW^K)^\top$ is the Gram matrix for the dot-product kernel, and softmax normalisation is a particular choice of normalisation. Hence MSA can be generalised to other choices of kernels and normalisation that are equally valid \citep{wang2018non, tsai2019transformer}. 

\section{LieSelfAttention: Details} \label{sec:sa_choices}

We explore the following non-exhaustive list of choices for content-based attention, location-based attention, combining content and location attention and normalisation of weights:

\textbf{Content-based attention} $k_c(f(g),f(g'))$:
\begin{enumerate}
    \item Dot-product: $\frac{1}{\sqrt{d_v}}\left( W^Q f(g) \right)^\top W^K f(g') \in \mathbb{R} \\ ~~ \text{for} ~~ W^Q, W^K \in \mathbb{R}^{d_v \times d_v}$
    \item Concat: $\mathtt{Concat}[W^Q f(g) , W^K f(g')] \in \mathbb{R}^{2d_v}$
    \item Linear-Concat-linear: $W \mathtt{Concat}[W^Q f(g) , W^K f(g')] \in \mathbb{R}^{d_s}$ \\ for $W \in \mathbb{R}^{d_s \times 2d_v}$.
\end{enumerate}

\textbf{Location-based attention} $k_l(g^{-1}g')$ for Lie groups $G$:
\begin{enumerate}
    \item Plain: $\nu[\log(g^{-1}g')]$
    \item MLP: $\mathtt{MLP}(\nu[\log(g^{-1}g')])$
\end{enumerate}
where $\log: G \rightarrow \mathfrak{g}$ is the log map from $G$ to its Lie algebra $\mathfrak{g}$, and $\nu: \mathfrak{g} \rightarrow \mathbb{R}^d$ is the isomorphism that extracts the free parameters from the output of the log map \citep{finzi2020generalizing}. We can use the same log map for discrete subgroups of Lie groups (e.g.~$C_n \leq SO(2), D_n \leq O(2)$). See \autoref{sec:derivations} for an introduction to the Lie algebra and the exact form of $\nu\circ\log (g)$ for common Lie groups.

\textbf{Combining content and location attention} $\alpha_f(g,g')$:
\begin{enumerate}
    \item Additive: $k_c(f(g),f(g')) + k_l(g^{-1}g')$
    \item MLP: $\mathtt{MLP}[\mathtt{Concat}[k_c(f(g),f(g')), k_l(g^{-1}g')]]$
    \item Multiplicative: $k_c(f(g),f(g')) \cdot k_l(g^{-1}g')$
\end{enumerate}
Note that the MLP case is a strict generalisation of the additive combination, and for this option $k_c$ and $k_l$ need not be scalars.

\textbf{Normalisation of weights} $\{w_f(g,g')\}_{g' \in G_f}$:
\begin{enumerate}
    \item Softmax: $\softmax{\{\alpha_f(g,g')\}_{g' \in G_f}}$
    \item Constant: $\{\frac{1}{|G_f|}\alpha_f(g,g')\}_{g' \in G_f}$
\end{enumerate}

We also outline how to extend the single-head \verb!LieSelfAttention! described in Algorithm \ref{alg:lie-self-attention} extends to \textbf{Multihead equivariant self-attention}. Let $M$ be the number of heads, assuming it divides $d_v$, with $m$ indexing the head. Then the output of each head is:
\begin{equation} \label{eq:saoutput}
    V^m(g) = \int_{G_f} w_f(g,g') W^{V,m} f(g') dg' \in \mathbb{R}^{d_v/M}
\end{equation}
The only difference is that $W^{Q,m}, W^{K,m}, W^{V,m} \in \mathbb{R}^{d_v/M \times d_v}$.
The multihead self-attention combines the heads using $W^O \in \mathbb{R}^{d_v \times d_v}$, to output:
\begin{equation}
f_{out}(g) = W^O \begin{bmatrix} V^1(g) \\ \vdots \\ V^M(g) \end{bmatrix}
\end{equation}

\section{Lie Algebras and Log maps}
\label{sec:logmap}
In this section we briefly introduce Lie algebras and log maps, mainly summarising relevant sections of \citet{hall2015lie}. See the reference for a formal and thorough treatment of Lie groups and Lie algebras.

Given a Lie group $G$, a smooth manifold, its \textbf{Lie algebra} $\mathfrak{g}$ is a vector space defined to be the tangent space at the identity element $e \in G$ (together with a bilinear operation called the Lie bracket $[x,y]$, whose details we omit as it is not necessary for understanding log maps). We most commonly deal with \textbf{matrix Lie groups}, namely subgroups of the general linear group $GL(n; \mathbb{C})$, the group of all $n \times n$ invertible matrices with entries in $\mathbb{C}$. This includes rotation/reflection groups $SO(n)$ and $O(n)$, as well as the group of translations $T(n)$ and roto-translations $SE(n)$, that are isomorphic to subgroups of $GL(n+1; \mathbb{C})$. For example, $SE(n)$ is isomorphic to the group of matrices of the form:
\begin{equation*}
    \begin{bmatrix}
       & & & \vert \\
       & R & & x \\
       & & & \vert \\
     \text{---} & 0 & \text{---} & 1
    \end{bmatrix} \in  \mathbb{R}^{(n+1) \times (n+1)}
\end{equation*}
where $R \in SO(n)$ and $x \in \mathbb{R}^n$. For such matrix Lie Groups $G$, the Lie algebra is precisely the set of all matrices $X$ such that $\exp(tX) \in G$ for all $t \in \mathbb{R}$, where $\exp$ is the matrix exponential ($\exp(A) = I + A + A^2 / 2! + \ldots$). Hence the Lie algebra $\mathfrak{g}$ can be thought of as a set of matrices, and the matrix exponential $\exp$ can be thought of as a map from the Lie algebra $\mathfrak{g}$ to $G$. This map turns out to be surjective for all the groups mentioned below, and hence we may define the log map $\log: G \rightarrow \mathfrak{g}$ in the other direction. Since the effective dimension of Lie algebra, say $d$, is smaller than the number of entries of the $n \times n$ (or $(n+1) \times (n+1)$ in the case of $SE(n)$ and $T(n)$) matrix element of the Lie algebra, we use a map $\nu: \mathfrak{g} \rightarrow \mathbb{R}^d$ that extracts the free parameters from the Lie algebra element, to obtain a form that is suitable as an input to a neural network. See below for concrete examples.

\label{sec:derivations}
\begin{itemize}
    \item $G=T(n)$, $t \in \mathbb{R}^n$, $\nu[\log(t)]=t$
    \item $G=SO(2), R = \begin{bmatrix} \cos \theta & - \sin \theta \\ \sin \theta & \cos \theta \end{bmatrix} \in \mathbb{R}^{2 \times 2}$
    \begin{equation}
        \nu[\log(R)] = \theta = \arctan(R_{10}/R_{01})
    \end{equation}
    \item $G=SE(2), R = \begin{bmatrix} \cos \theta & - \sin \theta \\ \sin \theta & \cos \theta \end{bmatrix} \in \mathbb{R}^{2 \times 2}, t \in \mathbb{R}^2$
    \begin{equation}
        \nu[\log(tR)] = 
            \begin{bmatrix}
                t'\\
                \theta
            \end{bmatrix}
    \end{equation}
    where $t' = V^{-1} t$,  $V = \begin{bmatrix} a & - b \\ b & a \end{bmatrix}$, $a \triangleq \frac{\sin \theta}{\theta}$, $b \triangleq \frac{1- \cos \theta}{\theta}$
    \item $G=SO(3), R \in \mathbb{R}^{3 \times 3}, t \in \mathbb{R}^3$: 
    \begin{equation}
        \nu[\log(R)] 
        = \nu\left[\frac{\theta}{2 \sin \theta}(R - R^\top)\right] 
        = \frac{\theta}{2 \sin \theta} 
            \begin{bmatrix}
                R_{21} - R_{12} \\
                R_{02} - R_{20} \\
                R_{10} - R_{01}
            \end{bmatrix}
    \end{equation}
    where $\cos \theta = \frac{\Tr(R) - 1}{2}$. Note that the Taylor expansion of $\theta/\sin \theta$ should be used when $\theta$ is small.
    \item $G=SE(3), R \in \mathbb{R}^{3 \times 3}, t \in \mathbb{R}^3$:
    \begin{equation}
         \nu[\log(tR)] = 
            \begin{bmatrix}
                t'\\
                r'
            \end{bmatrix}
    \end{equation}
    where $t'= V^{-1} t, r'=\nu[\log(R)]$, $V=I + \frac{1 - \cos \theta}{\theta^2}(R-R^\top) + \frac{\theta - \sin \theta}{\theta^3}(R-R^\top)^2$.
\end{itemize}

\paragraph{Canonical lifting without Log map.} Recall that location-based attention only requires a function $k_l(g^{-1}g')$ which we are free to parameterise in any way. Since various groups $G$ can naturally be expressed in terms of real matrices (see above), $g^{-1}g' \in G$ can be expressed as a (flattened) real vector. For example, any $g \in SO(2)$ can simply be expressed as a vector $[t, \theta]^\intercal$ where $t \in \mathbb{R}^2$ and $\theta \in [0, 2\pi)$. Therefore, we can bypass the log map $\nu \circ \log$ and directly use this vector, which we found to be more numerically stable and sometimes resulted in better performance of \verb!LieTransformer!. In particular, for \verb!LieConv-SE2! and \verb!LieTransformer-SE2! on the Hamiltonian spring dynamics task, we did not use the $\log$ map and instead opted for this ``canonical'' lift. We plan to also try this for \verb!LieConv-SE3! and \verb!LieTransformer-SE3! for the QM9 task.

\section{Memory and Time Complexity Comparison with LieConv}
\label{sec:memory}
\subsection{LieConv}
\begin{itemize}
\item Inputs: $\{g,f(g)\}_{g \in G_f}$ where
    \begin{itemize}
        \item $f(g) \in \mathbb{R}^{d_v}$
        \item $G_f$ defined as in Section \ref{sec:method}.
    \end{itemize}
\item Outputs: $\{g, \frac{1}{|\mathtt{nbhd}(g)|} \sum_{g' \in \mathtt{nbhd}(g)} k_L(g^{-1}g') f(g') \}_{g \in G_f}$ where
    \begin{itemize}
        \item $\mathtt{nbhd}(g)=\{g' \in G_f: \nu[\log(g)] < r\}$. Let us assume that $|\mathtt{nbhd}(g)| \approx n ~\forall g$.
        \item $k_L(g^{-1}g') =  \mathtt{MLP}_{\theta}(\nu[\log(g^{-1}g')]) \in \mathbb{R}^{d_{out} \times d_v}$. 
    \end{itemize}
\end{itemize}

There are (at least) two ways of computing \verb!LieConv!: 1. Naive and 2. PointConv Trick.
\begin{enumerate}
    \item \textbf{Naive}
    \begin{itemize}
        \item[\textendash] \textbf{Memory}: Store $k_L(g^{-1}g') \in \mathbb{R}^{d_{out} \times d_v} ~ \forall g \in G_f, g' \in \mathtt{nbhd}(g)$. This requires $O(|G_f|n d_{out} d_v)$ memory.
        \item[\textendash] \textbf{Time}: Compute $k_L(g^{-1}g') f(g') \forall g \in G_f, g' \in \mathtt{nbhd}(g)$. This requires $O(|G_f|n d_{out} d_v)$ flops.
    \end{itemize}
    \item \textbf{PointConv Trick}
    
    One-line summary: instead of applying a shared linear map then summing across $\mathtt{nbhd}$, first sum across $\mathtt{nbhd}$ then apply the linear map.
    
    Details: $k_L(g^{-1}g') = \mathtt{MLP}_{\theta}(\nu[\log(g^{-1}g')]) = \mathtt{reshape}(HM(g^{-1}g'), [d_{out},d_v])$ where 
    \begin{itemize}
        \item[\textendash] $M(g^{-1}g') \in \mathbb{R}^{d_{mid}}$ are the final layer activations of $\mathtt{MLP}_{\theta}$.
        \item[\textendash] $H \in \mathbb{R}^{d_{out}d_v \times d_{mid}}$ is the final linear layer of $\mathtt{MLP}_{\theta}$.
    \end{itemize}
    The trick assumes $d_{mid} \ll d_{out} d_v$, and reorders the computation as:
    \begin{align*}
        & \sum_{g' \in \mathtt{nbhd}(g)} \mathtt{reshape}(HM(g^{-1}g'), [d_{out},d_v]) f(g') \\
        & =  \mathtt{reshape}(H, [d_{out}, d_v d_{mid}]) \sum_{g' \in \mathtt{nbhd}(g)} M(g^{-1}g') \otimes f(g')
    \end{align*}
    where $\otimes$ is the Kronecker product: $x \otimes y = [x_1y_1, \ldots x_1y_{d_y}, \ldots, x_{d_x}y_1, \ldots x_{d_x}y_{d_y}] \in \mathbb{R}^{d_x d_y}$. So $M(g^{-1}g') \otimes f(g') \in \mathbb{R}^{d_v d_{mid}}$. 
    \begin{itemize}
        \item[\textendash] \textbf{Memory}: Store $M(g^{-1}g') ~\forall g \in G_f, g' \in \mathtt{nbhd}(g)$, and store $H$. This requires $O(|G_f| n d_{mid} + d_{out} d_v d_{mid})$ memory.
        \item[\textendash] \textbf{Time}: Compute $\sum_{g' \in \mathtt{nbhd}(g)} M(g^{-1}g') \otimes f(g')$ via matrix multiplication: 
        $\begin{bmatrix}
            \vert &  & \vert \\
            M(g^{-1}g'_1) & \ldots & M(g^{-1}g'_n) \\
            \vert &  & \vert
        \end{bmatrix}
        \begin{bmatrix}
            \text{---} & f(g'_1)  &\text{---} \\
            & \vdots &  \\
            \text{---} & f(g'_n)  &\text{---}
        \end{bmatrix}$. This requires $O(d_v n d_{mid})$ flops.
        
        Then multiply by $H$, requiring $O(d_v d_{out} d_{mid})$ flops.
        
        This is done for each $g \in G_f$, so the total number of flops is $O(|G_f| d_v d_{mid} (n + d_{out}))$.
    \end{itemize}
\end{enumerate}

\subsection{Equivariant Self-Attention}
\begin{itemize}
\item Inputs: $\{g,f(g)\}_{g \in G_f}$ where
    \begin{itemize}
        \item $f(g) \in \mathbb{R}^{d_v}$
        \item $G_f$ defined as in Section \ref{sec:method}.
    \end{itemize}
\item Outputs: $\{g, f(g) + \sum_{g' \in \mathtt{nbhd}(g)} w_f(g, g') W^V f(g') \}_{g \in G_f}$ where
    \begin{itemize}
        \item $\mathtt{nbhd}(g)=\{g' \in G_f: \nu[\log(g)] < r\}$. Let us assume that $|\mathtt{nbhd}(g)| \approx n ~\forall g$.
        \item $\{w_f(g,g')\}_{g' \in G_f} = \softmax{\{\alpha_f(g,g')\}_{g' \in G_f}}$ 
        \item $\alpha_f(g,g') = k_f(f(g),f(g')) + k_x(g^{-1}g')$
        \item $k_f(f(g), f(g')) = \left( W^Q f(g) \right)^\top W^K f(g') \in \mathbb{R}$
        \item $k_x(g) = \mathtt{MLP}_{\phi}(\nu[\log(g)]) \in \mathbb{R}$
        \item $W^Q, W^K, W^V \in \mathbb{R}^{d_v \times d_v}$.
    \end{itemize}
\item \textbf{Memory}: Store $\alpha_f(g,g')$ and $W^V f(g') ~\forall g \in G_f, g' \in  \mathtt{nbhd}(g)$. This requires $O(|G_f|n d_v)$ memory.
\item \textbf{Time}: Compute $k_f(f(g),f(g'))$ and $w_f(g,g') ~\forall g \in G_f, g' \in  \mathtt{nbhd}(g)$. This requires $O(|G_f|n d_v^2)$ flops.
\end{itemize}

With multihead self-attention ($M$ heads), the output is: 
\begin{equation*}
  f(g) + W^O \begin{bmatrix} V^1 \\ \vdots \\ V^M \end{bmatrix}  
\end{equation*}
where $W^O \in \mathbb{R}^{d_v \times d_v}$, $V^m = \sum_{g' \in \mathtt{nbhd}(g)} w_f(g,g') W^{V,m} f(g')$ for $W^{K,m}, W^{Q,m}, W^{V,m} \in \mathbb{R}^{d_v/M \times d_v}$.
\begin{itemize}
    \item \textbf{Memory}: Store $\alpha_f^m(g,g')$ and $W^{V,m} f(g') ~\forall g \in G_f, g' \in  \mathtt{nbhd}(g), m \in \{1, \ldots, M\}$. This requires $O(M|G_f|n + M|G_f|n d_v/M) = O(|G_f|n (M + d_v))$ memory.
    \item \textbf{Time}: Compute $k_f^m(f(g),f(g'))$ and $w_f^m(g,g') ~\forall g \in G_f, g' \in  \mathtt{nbhd}(g), m \in \{1, \ldots, M\}$. This requires $O(M|G_f|n d_v d_v/M) = O(|G_f|n d_v^2)$ flops.
\end{itemize}

\section{Other Equivariant/Invariant building blocks}
\label{apd:norm}

\textbf{G-Pooling} is simply averaging over the features across the group:
Inputs: $\{f(g)\}_{g \in G_f}$
Output: $\bar{f}(g) \triangleq \frac{1}{|G_f|}\sum_{g \in G_f} f(g)$
Note that G-pooling is invariant with respect to the regular representation.

\textbf{Pointwise MLPs} are MLPs applied independently to each $f(g)$ for $g \in G_f$. It is easy to show that any such pointwise operations are equivariant with respect to the regular representation.

\textbf{LayerNorm} \citep{ba2016layer} is defined as follows:

Inputs: $\{g,f(g)\}_{g \in G_f}$ where
\begin{itemize}
    \item $f(g) \in \mathbb{R}^{d_v}$
    \item $G_f$ defined as in Section \ref{sec:method}.
\end{itemize}
Outputs: $\{g, \beta \odot \frac{f(g) - m(g)}{\sqrt{v(g) + \epsilon}} + \gamma \}_{g \in G_f}$ where
\begin{itemize}
    \item Division in fraction above is \textit{scalar} division i.e. $\sqrt{v(g) + \epsilon} \in \mathbb{R}$.
    \item $m(g) = \text{Mean}_c f_c(g') \in \mathbb{R}$.
    \item $v(g) = \text{Var}_c f_c(g') \in \mathbb{R}$.
    \item $\beta, \gamma \in \mathbb{R}^D$ are learnable parameters.
\end{itemize}

\textbf{BatchNorm} We also describe BatchNorm \citep{ioffe2015batch} that is used in \cite{finzi2020generalizing} for completeness:

Inputs: $\smash{\{g,f^b(g)\}_{g \in G_f, b \in \mathcal{B}}}$ where
\begin{itemize}
    \item $f(g) \in \mathbb{R}^{d_v}$
    \item $G_f$ defined as in Section \ref{sec:method}, $\mathcal{B}$ is the batch of examples.
\end{itemize}
Outputs: $\smash{\{g, \beta \odot \frac{f^b(g) - \mathbf{m}(g)}{\sqrt{\mathbf{v}(g) + \epsilon}} + \gamma \}_{g \in G_f, b \in \mathcal{B}}}$ where
\begin{itemize}
    \item Division in fraction above denotes \textit{pointwise} division i.e. $\smash{\sqrt{\mathbf{v}(g) + \epsilon} \in \mathbb{R}^D}$.
    \item $\mathbf{m}(g) = \text{Mean}_{g' \in \mathtt{nbhd}(g), b \in \mathcal{B}} f^b(g') \in \mathbb{R}^D$ - Mean is taken for every channel.
    \item $\mathbf{v}(g) = \text{Var}_{g' \in \mathtt{nbhd}(g), b \in \mathcal{B}} f^b(g') \in \mathbb{R}^D$ - Var is taken for every channel.
    \item $\mathtt{nbhd}(g)=\{g' \in G_f: \nu[\log(g)] < r\}$.
    \item $\beta, \gamma \in \mathbb{R}^D$ are learnable parameters.
\end{itemize}
A moving average of $\mathbf{m}(g)$ and $\mathbf{v}(g)$ are tracked during training time for use at test time.
It is easy to check that both BatchNorm and LayerNorm are equivariant wrt the action of the regular representation $\pi$ on $f$ (for BatchNorm, note $g' \in \mathtt{nbhd}(g)$ iff $u^{-1} g' \in \mathtt{nbhd}(u^{-1}g)$).

\section{Experimental details}
\subsection{Counting shapes in 2D point clouds}
\label{sec:constellation_setup}
Each training / test example consists of up to two instances of each of the following shapes: triangles, squares, pentagons and the "L" shape. The $x_i$ are the 2D coordinates of each point and $\mathtt{f}_i=1$ for all points.

We performed an architecture search on the \verb!LieTransformer! first and then set the architecture of the \verb!SetTransformer! such that the models have a similar number of parameters (1075k for the \verb!SetTransformer! and 1048k for both \verb!LieTransformer-T2! and \verb!LieTransformer-SE2!) and depth.

\textbf{Model architecture}.
The architecture used for the \verb!SetTransformer! \citep{LeeLeeKim2019a} consists of 8 layers in the encoder, 8 layers in the decoder and 4 attention heads. No inducing points were used.

The architecture used for both \verb!LieTransformer-T2! and \verb!LieTransformer-SE2! is made of 10 layers, 8 heads and feature dimension $d_v=128$.  The dimension of the kernel used is 12. One lift sample was used for each point.

\textbf{Training procedure}.
We use Adam \citep{kingma2014adam} with parameters $\beta_1=0.5$ and $\beta_2=0.9$ and a learning rate of $1e-4$. Models are trained with mini-batches of size 32 until convergence.

\subsection{QM9}
\label{sec:QM9_setup}
For the QM9 experiment setup we follow the approach of \citet{anderson2019cormorant} for parameterising the inputs and for the train/validation/test split. The $\mathtt{f}_i$ is a learnable linear embeddings of the vector $[1, c_i, c_i^2]$ for charge $c_i$, with different linear maps for each atom type.  We split the available data as follows: 100k samples for training, 10$\%$ for a test set and the rest used for validation. 

In applying our model to this task, we ignore the bonding structure of the molecule. As noted in \citep{klicpera2020directional} this should not be needed to learn the task, although it may be helpful as auxiliary information. Given most methods compared against do not use such information, we follows this for a fair comparison (an exception is the $SE(3)$-Transformer \citep{fuchs2020se} that uses the bonding information). It would be possible to utilise the bonding structure both in the neighbourhood selection step and as model features by treating only atoms that are connected via a bond to another atom as in the neighbourhood of that atom.

We performed architecture and hyperparameter optimisation on the $\epsilon_{HOMO}$ task and then trained with the resulting hyperparameters on the other 11 tasks. \verb!LieTransformer-T3! uses 13 layers of attention blocks (performance saturated at 13 layers), using 8 heads ($M$) in each layer and feature dimension $d_v=848$. The attention kernel uses the $linear-concat-linear$ feature embedding, identity embedding of the Lie algebra elements, and an MLP to combine these embeddings into the final attention coefficients. The final part of the model used had minor differences to the one in diagram \ref{fig:architecture}. Instead of a global pooling layer followed by a 3 layer MLP, a single linear layer followed by global pooling was used. A single lift sample was used (since $H=\{e\}$ for $T(3)$-invariant models), with the radius of the neighbourhood chosen such that $|\mathtt{nbhd}_{\eta}(g)|=50 ~ \forall g \in G$ and we uniformly sample 25 points from this neighbourhood. 
\verb!LieTransformer-SE3! used a similar hyperparameter setting, except using 30 layers (performance saturated at 30 layers) and 2 lift samples were used for each input point ($|\hat{H}|=2$), with the radius of the neighbourhood $\eta$ chosen such that the $|\mathtt{nbhd}_{\eta}(g)|=25 ~ \forall g \in G$ and we uniformly sample 20 points from this neighbourhood. All models were trained using Adam, with a learning rate of $3e-4$ and a batch size of 75 for 500 epochs. For the \verb!LieConv! models we used the hyperparameter setting that was used in \citet{finzi2020generalizing}.

Training these models with $T(3)$ and $SE(3)$ equivariance took approximately 3 and 6 days respectively on a single Nvidia Tesla-V100. 

\subsection{Hamiltonian dynamics}
\label{sec:Hamiltonian_setup}

\textbf{Spring dynamics simulation}. We exactly follow the setup described in Appendix C.4 of \citet{finzi2020generalizing} for generating the trajectories used in the train and test data.

\textbf{Model architecture}. In all results shown except Figures \ref{fig:hamiltonian-parameter-efficiency-5-step} and \ref{fig:hamiltonian-parameter-efficiency-100-step}, we used a \verb!LieTransformer-T2! with 5 layers, 8 heads and feature dimension $d_v = 160$. The attention kernel uses dot-product for the content component, a 3-layer MLP with hidden layer width 16 for the location component and addition to combine the content and location attention components. (Also, see end of \autoref{sec:logmap} for a relevant discussion about the use of the $\log$ map in location-attention $k_l$ for this task.) We use constant normalisation of the weights. We observed a significant drop in performance when, instead of constant normalisation, we used softmax normalisation (which caused small gradients at initialization leading to optimization difficulties). The architecture had 842k parameters. Our small models (with 139k parameters) in Figures \ref{fig:hamiltonian-parameter-efficiency-5-step} and \ref{fig:hamiltonian-parameter-efficiency-100-step} use 3 layers and feature dimension $d_v = 80$, keeping all else fixed. \verb!LieTransformer-SE(2)! and \verb!LieConv-SE(2)! in Figures \ref{fig:hamiltonian-parameter-efficiency-5-step} and \ref{fig:hamiltonian-parameter-efficiency-100-step} used 2 lift samples, which were deterministically chosen to be $\hat{H} = C_2 < SO(2)$, where $C_2$ is the group of $180^\circ$ rotations. In fact, this yields \textit{exact} equivariance to $T(2) \rtimes C_2$. Note that the true Hamiltonian $H(\mathbf{q}, \mathbf{p})$ for the spring system separates as $H(\mathbf{q}, \mathbf{p}) = K(\mathbf{p}) + V(\mathbf{q})$ where $K$ and $V$ are the kinetic and potential energies of the system respectively.  Following \citet{finzi2020generalizing}, our model parameterises the potential term $V$. In particular, $x_i$ is particle $i$'s position and $\mathtt{f}_i = (m_i, k_i)$ where $m_i$ is its mass and $k_i$ is used to define the spring constants: $k_ik_j$ is spring constant for the spring connecting particles $i$ and $j$ (see Appendix C.4 of \citet{finzi2020generalizing} for details).

\textbf{Training details}. To train the \verb!LieTransformer!,  we used Adam with a learning rate of 0.001 with cosine annealing and a batch size of 100. For a training dataset of size $n$, we trained the model for $400 \sqrt{3000/n}$ epochs (although we found model training usually converged with fewer epochs). When $n \leq 100$, we used the full dataset in each batch. For training the \verb!LieConv! baseline, we used their default architecture (with 895k parameters) and hyperparameter settings for this task, except for the number of epochs which was $400 \sqrt{3000/n}$ to match the setting used for training \verb!LieTransformer!.\footnote{This yields better results for \texttt{LieConv} compared to those reported by \citet{finzi2020generalizing}, where they use fewer total epochs.} The small \verb!LieConv! models (with 173k parameters) in Figures \ref{fig:hamiltonian-parameter-efficiency-5-step} and \ref{fig:hamiltonian-parameter-efficiency-100-step} use 3 layers and 192 channels (instead of the default 4 layers and 384 channels). Lastly, only for the data efficiency results in Figure \ref{fig:hamiltonian-data-efficiency}, we used early stopping by validation loss and generated \textit{nested} training datasets as the training size varies, keeping the test dataset fixed. 

\textbf{Loss computation.} One small difference between our setup and that of \citet{finzi2020generalizing} is in the way we compute the test loss. Since we compare models' losses not only over 5-step roll-outs but also longer 100-step roll-outs, we average the individual time step losses using a geometric mean rather than an arithmetic mean as in \citet{finzi2020generalizing}. Since the losses for later time steps are typically orders of magnitude higher than for earlier time steps (see e.g. Figure \ref{fig:hamiltonian-1e4-rollout}), a geometric mean prevents the losses for later time steps from dominating over the losses for the earlier time steps. During training, we use an arithmetic mean across time steps to compute the loss for optimization, exactly as in \citet{finzi2020generalizing}. This applies for both \verb!LieTransformer! and \verb!LieConv!.

\section{Additional experimental results}



\subsection{Hamiltonian dynamics}
\label{sec:hamiltonian_extra}
\begin{figure}[h]
    \centering
    \includegraphics[width=.45\textwidth, trim=0mm 0mm 0mm 16.3mm, clip]{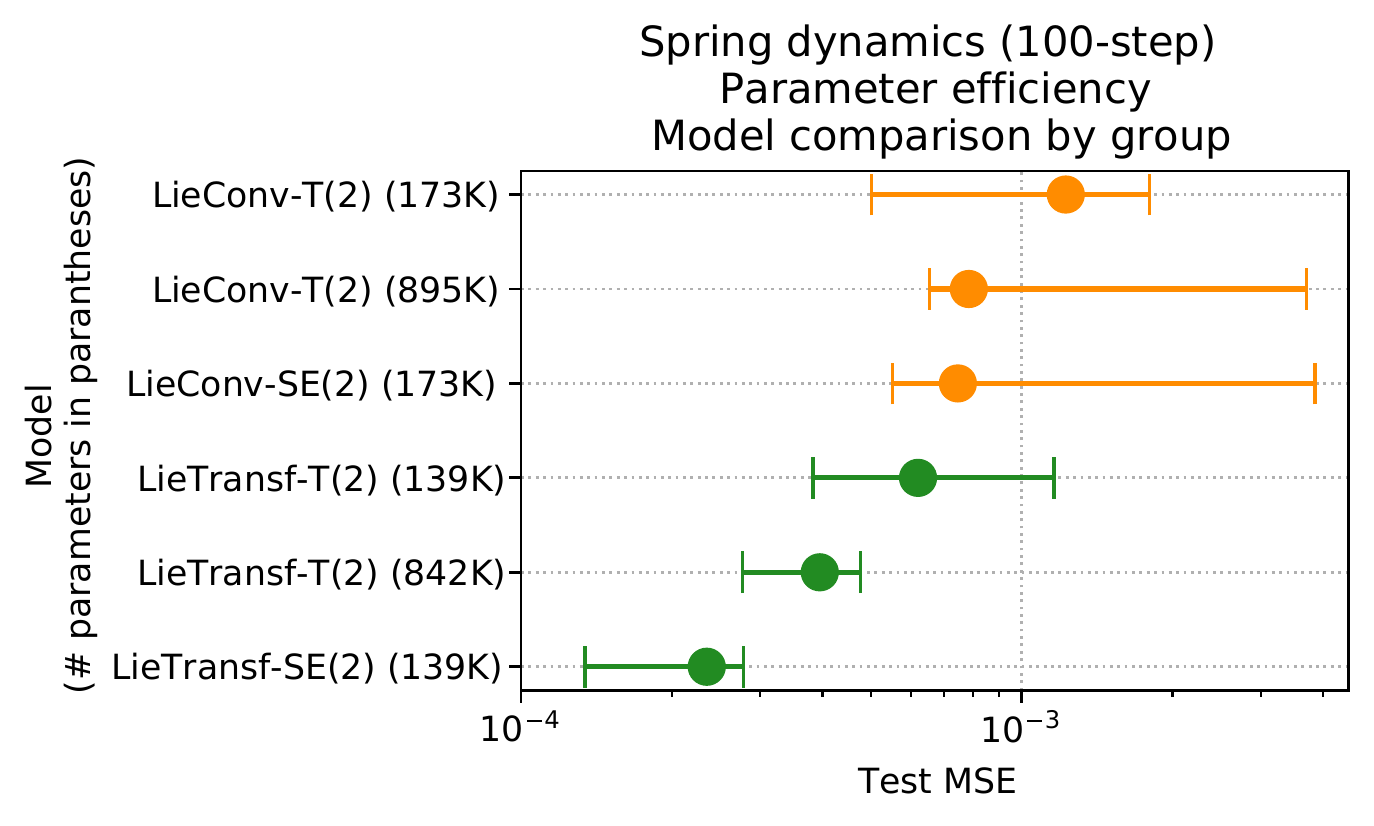}
    \caption{Comparison of models by group and number of parameters over 100-step trajectory roll-outs. Similar to the results of Figure \ref{fig:hamiltonian-parameter-efficiency-5-step}, LieTransformer models outperform their LieConv counterparts when fixing the group and using approximately equal number of parameters. Moreover, models (approximately) equivariant to SE(2) outperform their T(2) counterparts, with LieTransformer-SE(2) again outperforming all other models despite having the smallest number of parameters. Plot shows the median across at least 5 random seeds with interquartile range.}
    \label{fig:hamiltonian-parameter-efficiency-100-step}
\end{figure}

\begin{figure}[t]
    \centering
    \includegraphics[width=.99\textwidth]{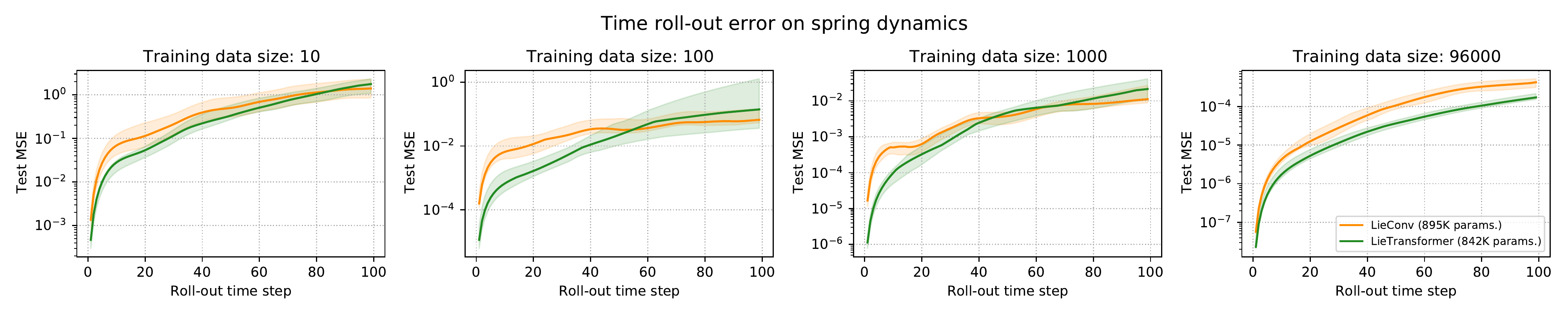}
    \caption{Plots of model error as a function of time step for various data sizes. As can be seen, the LieTransformer generally outperforms LieConv across various training data sizes.}
    \label{fig:appendix-time-roll-outs}
\end{figure}

\begin{figure}[t]
    \centering
    {\includegraphics[width=0.3\textwidth]{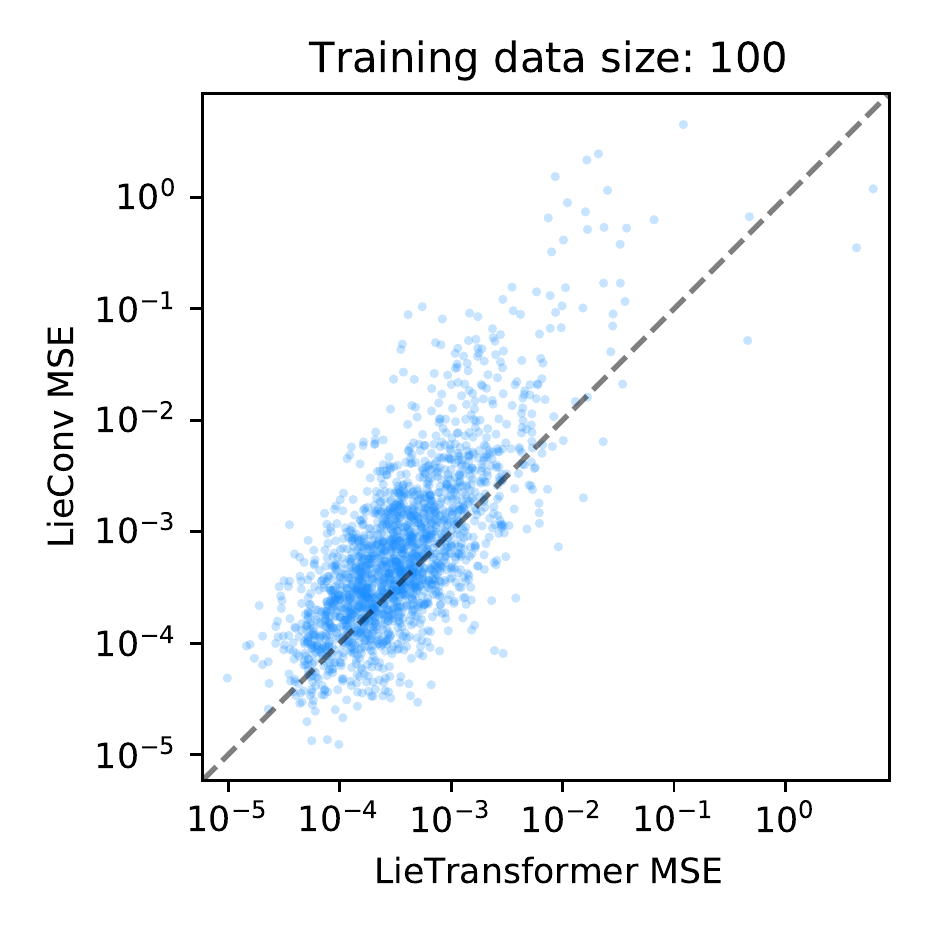}}
    \quad
    {\includegraphics[width=0.3\textwidth]{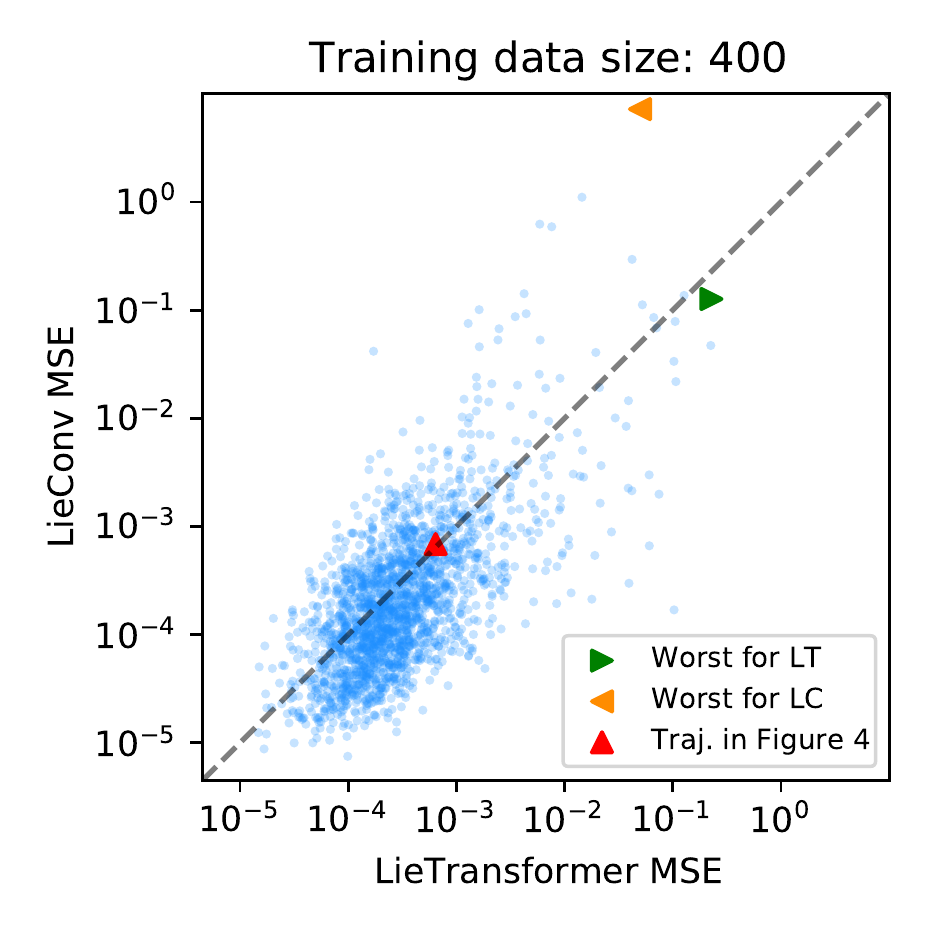}}
    \quad
    {\includegraphics[width=0.3\textwidth]{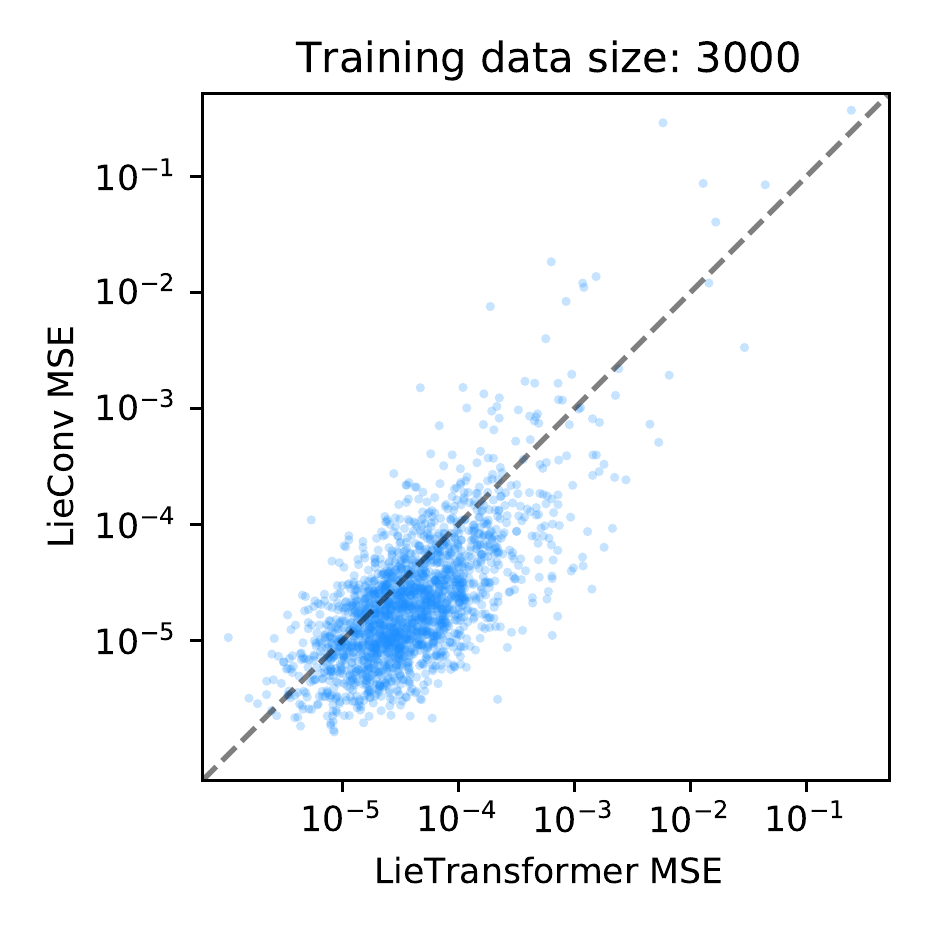}}
    \caption{Scatter plots comparing the MSE of the LieTransformer against the MSE of LieConv for various training dataset sizes. Each point in a scatter plot corresponds to a 100-step test trajectory, indicating the losses achieved by both models on that trajectory. In the middle figure we have highlighted the MSEs corresponding to the trajectories shown in Figures \ref{fig:hamiltonian-trajectories} and \ref{fig:hamiltonian-appendix-trajectories}.}
    \label{fig:traj-scatter}
\end{figure}

\begin{figure}[t]
    \centering
    \begin{subfigure}[t]{0.45\textwidth}
        \centering
        \includegraphics[width=\textwidth]{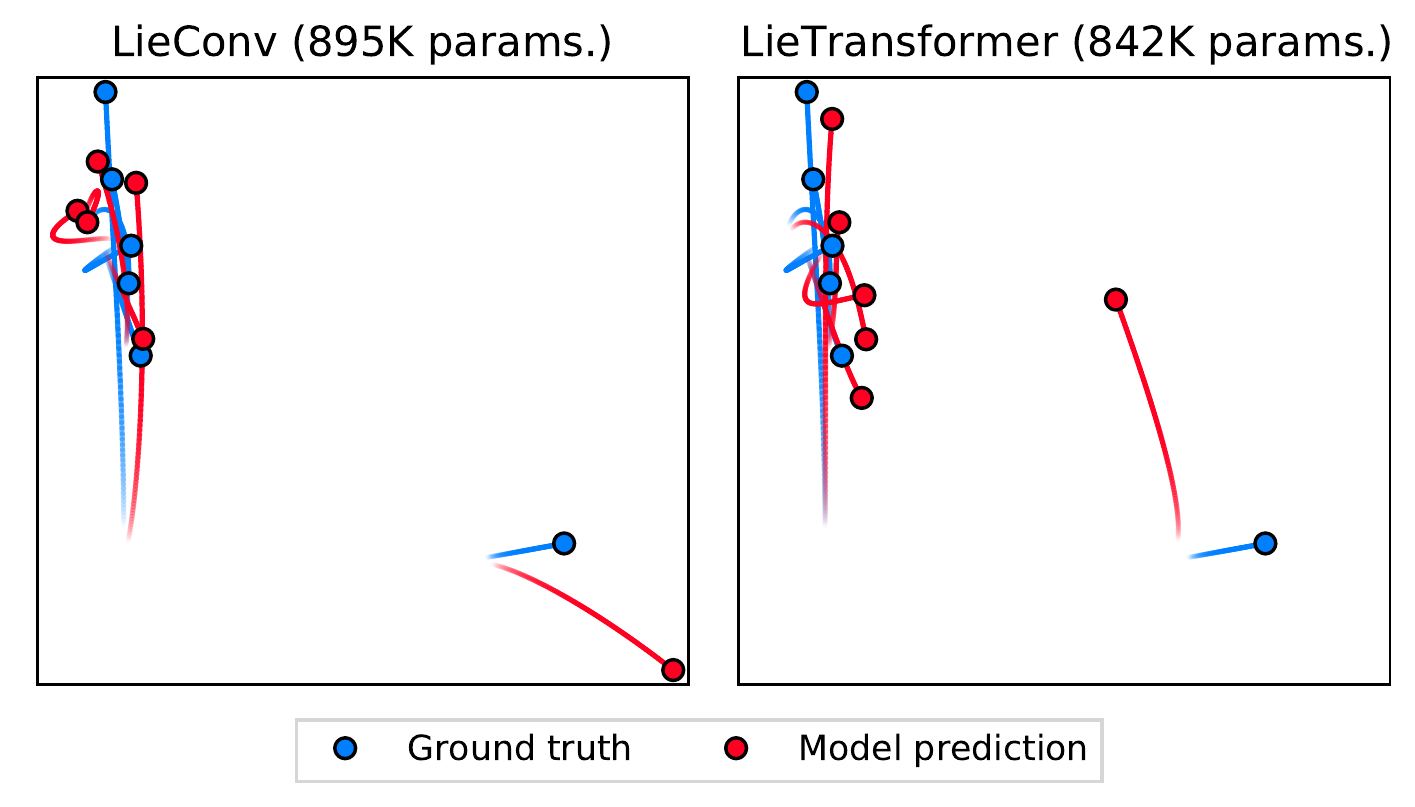}
        \caption{Test trajectory where LieTransformer has the highest error.}
    \end{subfigure}
    \quad \quad
    \begin{subfigure}[t]{0.45\textwidth}
        \centering
        \includegraphics[width=\textwidth]{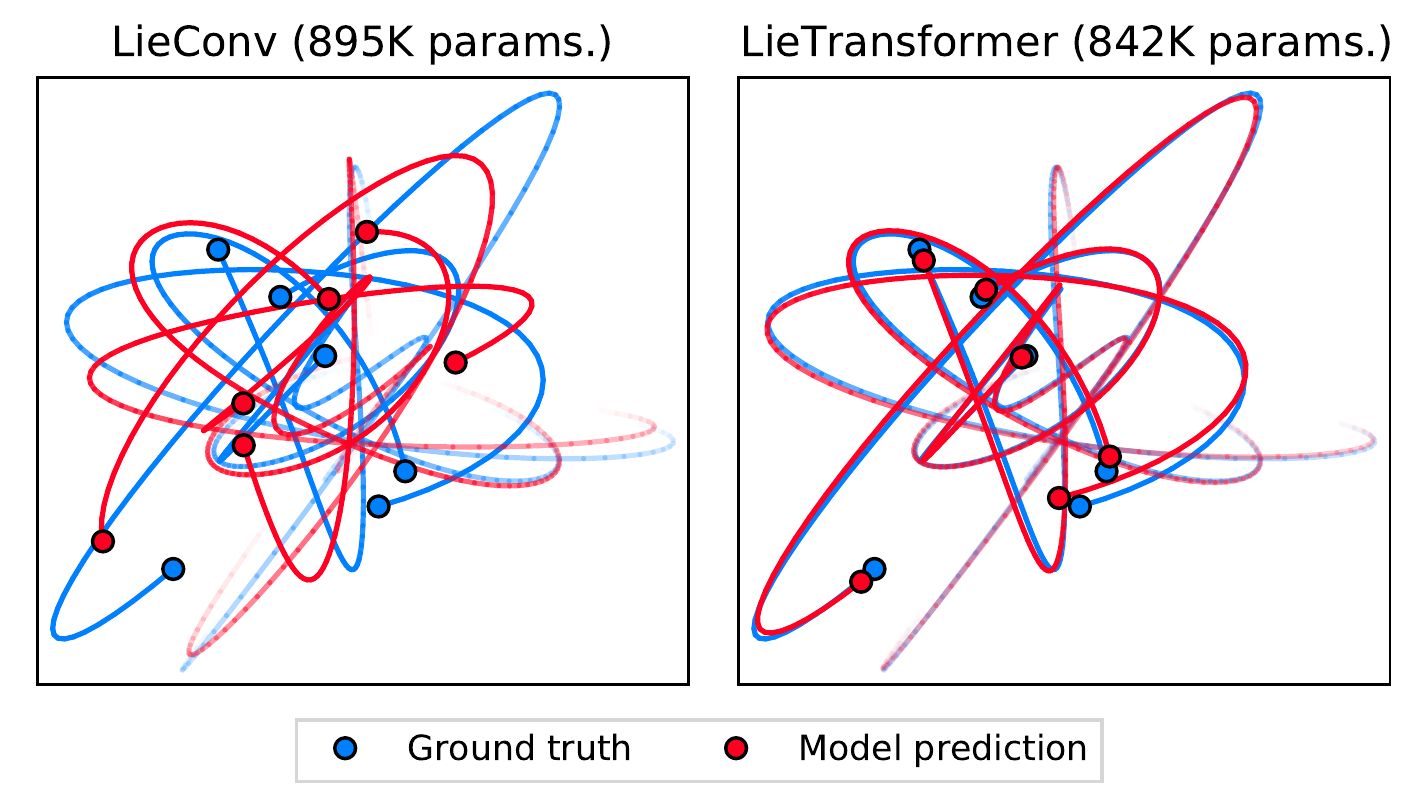}
        \caption{Test trajectory where LieConv has the highest error.}
    \end{subfigure}
    
    \caption{Additional example trajectories comparing LieTransformer and LieConv. Both models are trained on a dataset of size 400. See Figure \ref{fig:traj-scatter} for a scatter plot showing these test trajectories and the one in Figure \ref{fig:hamiltonian-trajectories} in relation to all other trajectories in the test dataset.}
    \label{fig:hamiltonian-appendix-trajectories}
\end{figure}

}  

\end{document}